\newtheorem{proposition}{Proposition}%[section]
\newtheorem{assumption}{Assumption}%[section]
\newtheorem{lemma}{Lemma}
\newcommand{\cmark}{\ding{51}}%
\newcommand{\xmark}{\ding{55}}%
\newcommand{\name}{ProCo}%
\crefname{section}{Sec.}{Secs.}
\Crefname{section}{Section}{Sections}
\crefname{table}{Tab.}{Tabs.}
\Crefname{table}{Table}{Tables}
\crefname{assumption}{Assumption}{Assumptions}
\begin{document}

%
% paper title
% Titles are generally capitalized except for words such as a, an, and, as,
% at, but, by, for, in, nor, of, on, or, the, to and up, which are usually
% not capitalized unless they are the first or last word of the title.
% Linebreaks \\ can be used within to get better formatting as desired.
% Do not put math or special symbols in the title.
\title{Probabilistic Contrastive Learning for Long-Tailed Visual Recognition}

% author names and IEEE memberships
% note positions of commas and nonbreaking spaces ( ~ ) LaTeX will not break
% a structure at a ~ so this keeps an author's name from being broken across
% two lines.
% use \thanks{} to gain access to the first footnote area
% a separate \thanks must be used for each paragraph as LaTeX2e's \thanks
% was not built to handle multiple paragraphs
%
%
%\IEEEcompsocitemizethanks is a special \thanks that produces the bulleted
% lists the Computer Society journals use for "first footnote" author
% affiliations. Use \IEEEcompsocthanksitem which works much like \item
% for each affiliation group. When not in compsoc mode,
% \IEEEcompsocitemizethanks becomes like \thanks and
% \IEEEcompsocthanksitem becomes a line break with idention. This
% facilitates dual compilation, although admittedly the differences in the
% desired content of \author between the different types of papers makes a
% one-size-fits-all approach a daunting prospect. For instance, compsoc 
% journal papers have the author affiliations above the "Manuscript
% received ..."  text while in non-compsoc journals this is reversed. Sigh.

\author{Chaoqun~Du,
        Yulin~Wang,
        Shiji~Song,
        and Gao~Huang
\IEEEcompsocitemizethanks{\IEEEcompsocthanksitem C. Du, Y. Wang, S. Song, G. Huang are with the Department of Automation, BNRist, Tsinghua University, Beijing 100084, China. Email: \{dcq20, wang-yl19\}@mails.tsinghua.edu.cn, shijis@mail.tsinghua.edu.cn, gaohuang@tsinghua.edu.cn. Corresponding author: Gao Huang.
% M. Shell was with the Department
% of Electrical and Computer Engineering, Georgia Institute of Technology, Atlanta,
% GA, 30332.\protect\\
% % note need leading \protect in front of \\ to get a newline within \thanks as
% % \\ is fragile and will error, could use \hfil\break instead.
% E-mail: see http://www.michaelshell.org/contact.html
% \IEEEcompsocthanksitem Corresponding author: Gao Huang (gaohuang@tsinghua.edu.cn)
} % <-this % stops a space
% \thanks{Manuscript received XX XX, 201X; revised XX XX, 201X.}
}

% note the % following the last \IEEEmembership and also \thanks - 
% these prevent an unwanted space from occurring between the last author name
% and the end of the author line. i.e., if you had this:
% 
% \author{....lastname \thanks{...} \thanks{...} }
%                     ^------------^------------^----Do not want these spaces!
%
% a space would be appended to the last name and could cause every name on that
% line to be shifted left slightly. This is one of those "LaTeX things". For
% instance, "\textbf{A} \textbf{B}" will typeset as "A B" not "AB". To get
% "AB" then you have to do: "\textbf{A}\textbf{B}"
% \thanks is no different in this regard, so shield the last } of each \thanks
% that ends a line with a % and do not let a space in before the next \thanks.
% Spaces after \IEEEmembership other than the last one are OK (and needed) as
% you are supposed to have spaces between the names. For what it is worth,
% this is a minor point as most people would not even notice if the said evil
% space somehow managed to creep in.

% The paper headers
\markboth{IEEE Transactions on Pattern Analysis and Machine Intelligence}%
{Shell \MakeLowercase{\textit{et al.}}: Bare Demo of IEEEtran.cls for Computer Society Journals}
% The only time the second header will appear is for the odd numbered pages
% after the title page when using the twoside option.
 
% *** Note that you probably will NOT want to include the author's ***
% *** name in the headers of peer review papers.                   ***
% You can use \ifCLASSOPTIONpeerreview for conditional compilation here if
% you desire.

% The publisher's ID mark at the bottom of the page is less important with
% Computer Society journal papers as those publications place the marks
% outside of the main text columns and, therefore, unlike regular IEEE
% journals, the available text space is not reduced by their presence.
% If you want to put a publisher's ID mark on the page you can do it like
% this:
%\IEEEpubid{0000--0000/00\$00.00~\copyright~2015 IEEE}
% or like this to get the Computer Society new two part style.
%\IEEEpubid{\makebox[\columnwidth]{\hfill 0000--0000/00/\$00.00~\copyright~2015 IEEE}%
%\hspace{\columnsep}\makebox[\columnwidth]{Published by the IEEE Computer Society\hfill}}
% Remember, if you use this you must call \IEEEpubidadjcol in the second
% column for its text to clear the IEEEpubid mark (Computer Society jorunal
% papers don't need this extra clearance.)

% use for special paper notices
%\IEEEspecialpapernotice{(Invited Paper)}

% for Computer Society papers, we must declare the abstract and index terms
% PRIOR to the title within the \IEEEtitleabstractindextext IEEEtran
% command as these need to go into the title area created by \maketitle.
% As a general rule, do not put math, special symbols or citations
% in the abstract or keywords.
\IEEEtitleabstractindextext{%
\begin{abstract}
    Long-tailed distributions frequently emerge in real-world data, where a large number of minority categories contain a limited number of samples.
    Such imbalance issue considerably impairs the performance of standard supervised learning algorithms, which are mainly designed for balanced training sets.
    Recent investigations have revealed that supervised contrastive learning exhibits promising potential in alleviating the data imbalance.
    However, the performance of supervised contrastive learning is plagued by an inherent challenge: it necessitates sufficiently large batches of training data to construct contrastive pairs that cover all categories, yet this requirement is difficult to meet in the context of class-imbalanced data.
    To overcome this obstacle, we propose a novel probabilistic contrastive (\name) learning algorithm that estimates the data distribution of the samples from each class in the feature space, and samples contrastive pairs accordingly.
    In fact, estimating the distributions of all classes using features in a small batch, particularly for imbalanced data, is not feasible.
    Our key idea is to introduce a reasonable and simple assumption that the normalized features in contrastive learning follow a mixture of von Mises-Fisher (vMF) distributions on unit space, which brings two-fold benefits. First, the distribution parameters can be estimated using only the first sample moment, which can be efficiently computed in an online manner across different batches.
   Second, based on the estimated distribution, the vMF distribution allows us to sample an infinite number of contrastive pairs and derive a closed form of the expected contrastive loss for efficient optimization.
    Other than long-tailed problems, \name~can be directly applied to semi-supervised learning by generating pseudo-labels for unlabeled data, which can subsequently be utilized to estimate the distribution of the samples inversely.
    Theoretically, we analyze the error bound of \name.
    Empirically, extensive experimental results on supervised/semi-supervised visual recognition and object detection tasks demonstrate that \name~consistently outperforms existing methods across various datasets.
    Our code is available at \url{https://github.com/LeapLabTHU/ProCo}.
\end{abstract}

\begin{IEEEkeywords}
Long-Tailed Visual Recognition, Contrastive Learning, Representation Learning, Semi-Supervised Learning.
\end{IEEEkeywords}
}

% make the title area
\maketitle

% To allow for easy dual compilation without having to reenter the
% abstract/keywords data, the \IEEEtitleabstractindextext text will
% not be used in maketitle, but will appear (i.e., to be "transported")
% here as \IEEEdisplaynontitleabstractindextext when the compsoc 
% or transmag modes are not selected <OR> if conference mode is selected 
% - because all conference papers position the abstract like regular
% papers do.
\IEEEdisplaynontitleabstractindextext
% \IEEEdisplaynontitleabstractindextext has no effect when using
% compsoc or transmag under a non-conference mode.

% For peer review papers, you can put extra information on the cover
% page as needed:
% \ifCLASSOPTIONpeerreview
% \begin{center} \bfseries EDICS Category: 3-BBND \end{center}
% \fi
%
% For peerreview papers, this IEEEtran command inserts a page break and
% creates the second title. It will be ignored for other modes.
\IEEEpeerreviewmaketitle

\IEEEraisesectionheading{\section{Introduction}\label{sec:introduction}}
% Computer Society journal (but not conference!) papers do something unusual
% with the very first section heading (almost always called "Introduction").
% They place it ABOVE the main text! IEEEtran.cls does not automatically do
% this for you, but you can achieve this effect with the provided
% \IEEEraisesectionheading{} command. Note the need to keep any \label that
% is to refer to the section immediately after \section in the above as
% \IEEEraisesectionheading puts \section within a raised box.

% The very first letter is a 2 line initial drop letter followed
% by the rest of the first word in caps (small caps for compsoc).
% 
% form to use if the first word consists of a single letter:
% \IEEEPARstart{A}{demo} file is ....
% 
% form to use if you need the single drop letter followed by
% normal text (unknown if ever used by the IEEE):
% \IEEEPARstart{A}{}demo file is ....
% 
% Some journals put the first two words in caps:
% \IEEEPARstart{T}{his demo} file is ....
% 
% Here we have the typical use of a "T" for an initial drop letter
% and "HIS" in caps to complete the first word.

\IEEEPARstart{W}{ith} the accelerated progress in deep learning and the emergence of large, well-organized datasets, significant advancements have been made in computer vision tasks, including image classification \cite{Krizhevsky2017,He2016,Huang2017}, object detection \cite{Ren2015}, and semantic segmentation \cite{Zhao2017}.
The meticulous annotation of these datasets ensures a balance among various categories during their development \cite{Russakovsky2015}.
Nevertheless, in practical applications, acquiring comprehensive datasets that are both balanced and encompass all conceivable scenarios remains a challenge.
Data distribution in real-world contexts often adheres to a long-tail pattern, characterized by an exponential decline in the number of samples per class from head to tail \cite{VanHorn2018}.
This data imbalance presents a considerable challenge for training deep models, as their ability to generalize to infrequent categories may be hindered due to the limited training data available for these classes.
Furthermore, class imbalance can induce a bias towards dominant classes, leading to suboptimal performance on minority classes \cite{Graf2021,Fang2021}. Consequently, addressing the long-tail distribution issue is crucial for the successful application of computer vision tasks in real-world scenarios.

In tackling the long-tailed data conundrum, a multitude of algorithms have been developed by adapting the traditional cross-entropy learning objective \cite{Cui2019,Kang2020,Cao2019,Menon2021}.
Recent studies, however, have unveiled that supervised contrastive learning (SCL) \cite{Khosla2020} may serve as a more suitable optimization target in terms of resilience against long-tail distribution \cite{Li2022a,Zhu2022}.
More precisely, SCL deliberately integrates label information into the formulation of positive and negative pairs for the contrastive loss function.
Unlike self-supervised learning, which generates positive samples through data augmentation of the anchor, SCL constructs positive samples from the same class as the anchor.
Notably, initial exploration of this approach has already yielded performance surpassing most competitive algorithms designed for long-tail distribution \cite{Cui2021,Wang2021,Kang2020a}.

Despite its merits, SCL still suffers from an inherent limitation. 
To guarantee performance, SCL necessitates a considerable batch size for generating sufficient contrastive pairs \cite{Khosla2020}, resulting in a substantial computational and memory overhead.
Notably, this issue becomes more pronounced with long-tailed data in real-world settings, where tail classes are infrequently sampled within a mini-batch or memory bank.
Consequently, the loss function's gradient is predominantly influenced by head classes, leading to a lack of information from tail classes and an inherent tendency for the model to concentrate on head classes while disregarding tail classes \cite{Zhu2022,Cui2021}. 
As an example, in the Imagenet-LT dataset, a typical batch size of $4096$ and memory size of $8192$ yield an average of fewer than one sample per mini-batch or memory bank for $212$ and $89$ classes, respectively.

In this study, we address the aforementioned issue with a simple but effective solution.
Our primary insight involves considering the sampling of an infinite number of contrastive pairs from the actual data distribution and solving the expected loss to determine the optimization objective.
By directly estimating and minimizing the expectation, the need for maintaining a large batch size is obviated.
Moreover, since all classes are theoretically equivalent in the expectation, the long-tail distribution problem in real-world data is naturally alleviated.

However, implementing our idea is not straightforward due to two obstacles: 1) the methodology for modeling the actual data distribution is typically complex, e.g., training deep generative models \cite{Goodfellow2020,SohlDickstein2015,Guo2022}; and 2) calculating the expected training loss in a closed form is difficult.
In this paper, we simultaneously tackle both challenges by proposing a novel probabilistic contrastive learning algorithm as illustrated in \cref{fig:fig1}.
Our method is inspired by the intriguing observation that deep features generally contain rich semantic information, enabling their statistics to represent the intra-class and inter-class variations of the data \cite{Wang2021a,Li2021,Cai2020}.
These methods model unconstrained features with a normal distribution from the perspective of data augmentation and obtain an upper bound of the expected cross-entropy loss for optimization.
However, due to the normalization of features in contrastive learning, direct modeling with a normal distribution is not feasible.
In addition, it is not possible to estimate the distributions of all classes in a small batch for long-tailed data.
Therefore, we adopt a reasonable and simple von Mises-Fisher distribution on the unit sphere in $\mathbb{R}^n$ to model the feature distribution, which is commonly considered as an extension of the normal distribution to the hypersphere.
It brings two advantages: 1) the distribution parameters can be estimated by maximum likelihood estimation using only the first sample moment, which can be efficiently computed across different batches during the training process;
and 2) building upon this formulation, we theoretically demonstrate that a closed form of the expected loss can be rigorously derived as the sampling number approaches infinity rather than an upper bound, which we designate as the \name~loss.
This enables us to circumvent the necessity of explicitly sampling numerous contrastive pairs and instead minimize a surrogate loss function, which can be efficiently optimized and does not introduce any extra overhead during inference.

\begin{figure}[t]
    \centering
    \includegraphics[width=0.95\linewidth]{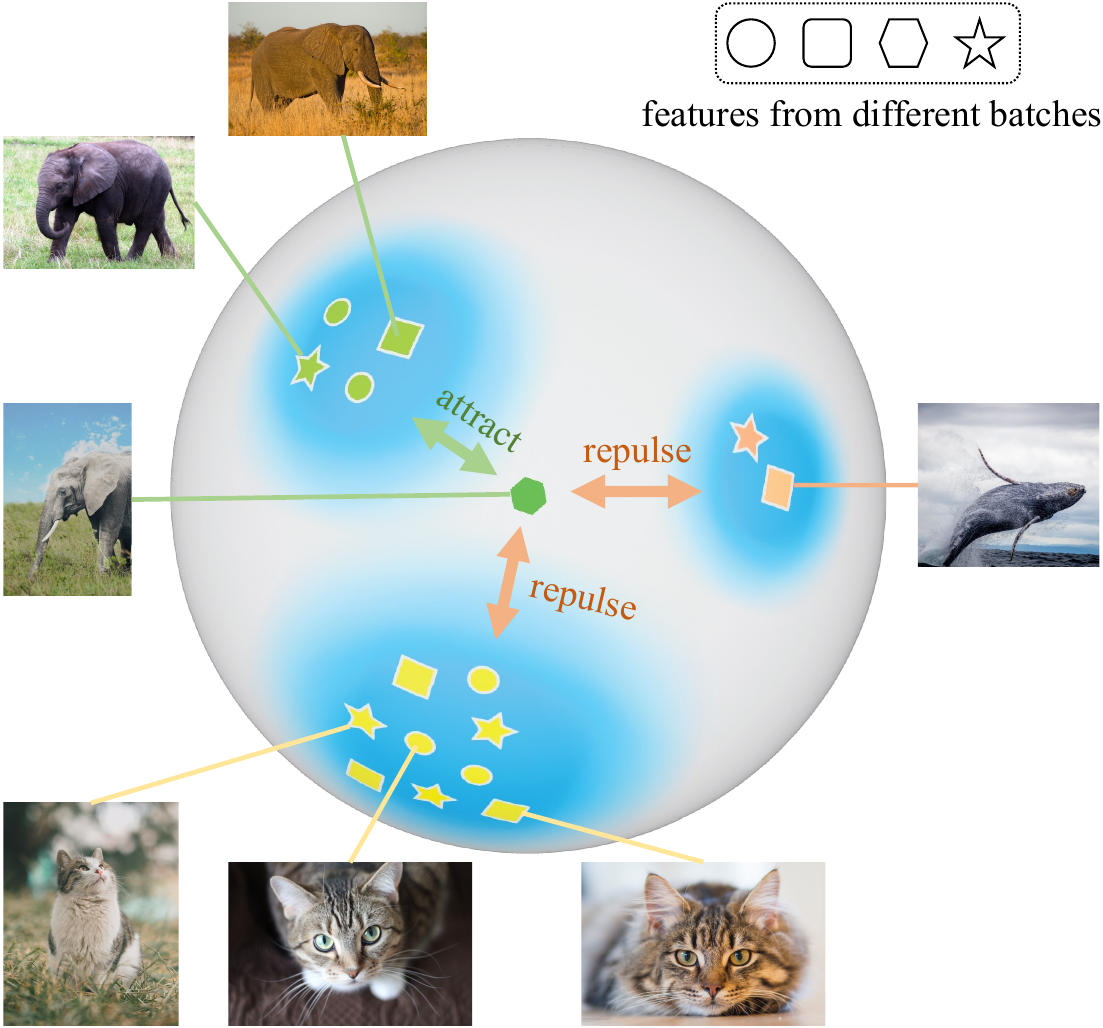}
    \caption{Illustration of Probabilistic Contrastive Learning. \name~estimates the distribution of samples based on the features from different batches and samples contrastive pairs from it. Moreover, a closed form of expected contrastive loss is derived by sampling an infinite number of contrastive pairs, which eliminates the inherent limitation of SCL on large batch sizes. }
    \label{fig:fig1}
\end{figure}

Furthermore, we extend the application of the proposed \name~algorithm to more realistic imbalanced semi-supervised learning scenarios, where only a small portion of the entire training data set possesses labels \cite{Jia2021,Miyato2019,Kingma2014,Rasmus2015}.
Semi-supervised algorithms typically employ a strategy that generates pseudo-labels for unlabeled data based on the model's predictions, using these labels to regularize the model's training process.
Consequently, the \name~algorithm can be directly applied to unlabeled samples by generating pseudo-labels grounded in the \name~loss, which can subsequently be used to estimate the feature distribution inversely.

Despite its simplicity, the proposed \name~algorithm demonstrates consistent effectiveness.
We perform extensive empirical evaluations on supervised/semi-supervised image classification  and object detection tasks with CIFAR-10/100-LT,  ImageNet-LT,
iNaturalist 2018, and LVIS v1.
The results demonstrate that \name~consistently improves the generalization performance of existing competitive long-tailed recognition methods.
Furthermore, since \name~is independent of imbalanced class distribution theoretically, experiments are also conducted on balanced datasets. The results indicate that \name~achieves enhanced performance on balanced datasets as well.

The primary contributions of this study are outlined as follows:
\begin{itemize}
    \item We propose a novel probabilistic contrastive learning (\name) algorithm for long-tailed recognition problems.
        By adopting a reasonable and simple von Mises-Fisher (vMF) distribution to model the feature distribution, we can estimate the parameters across different batches efficiently.
        \name~eliminates the inherent limitation of SCL on large batch size by sampling contrastive pairs from the estimated distribution, particularly when dealing with imbalanced data (see \cref{fig:representation branch}).
    \item We derive a closed form of expected supervised contrastive loss based on the estimated vMF distribution when the sampling number tends to infinity
        and theoretically analyze the error bound of the \name~loss (see \cref{sec:error_analysis}).
        This approach eliminates the requirement of explicitly sampling numerous contrastive pairs and, instead, focuses on minimizing a surrogate loss function. The surrogate loss function can be efficiently optimized without introducing any additional overhead during inference.
    \item We employ the proposed \name~algorithm to address imbalanced semi-supervised learning scenarios in a more realistic manner. This involves generating pseudo-labels based on the \name~loss. Subsequently, we conduct comprehensive experiments on various classification datasets to showcase the efficacy of the ProCo algorithm.

\end{itemize}

\section{Related Work}
\textbf{Long-tailed Recognition.}
To address the long-tailed recognition problem, early rebalancing methods can be classified into two categories: re-sampling and re-weighting.
Re-sampling techniques aid in the acquisition of knowledge pertaining to tail classes by adjusting the imbalanced distribution of training data through either undersampling~\cite{Kubat1997,Wallace2011} or oversampling~\cite{Chawla2002}.
Re-weighting methods adapt the loss function to promote greater gradient contribution for tail classes~\cite{Huang2016,Menon2013} and even samples~\cite{Cui2019}.
Nevertheless, Kang et al.~\cite{Kang2020} demonstrated that strong long-tailed recognition performance can be attained by merely modifying the classifier, without rebalancing techniques.
Furthermore, post-hoc normalisation of the classifier weights~\cite{Kang2020,Kim2020a,Zhang2019} and loss margin modification~\cite{Menon2021,Ren2020,Tan2020,Cao2019} have been two effective and prevalent methods.
Post-hoc normalisation is motivated by the observation that the classifier weight norm tends to correlate with the class distribution, which can be corrected by normalising the weights.
Loss margin modification methods incorporate prior information of class distribution into the loss function by adjusting the classifier’s margin.
Logit Adjustment~\cite{Menon2021} and Balanced Softmax~\cite{Ren2020} deduce that the classifier’s decision boundary for each class corresponds to the $\log$ of the prior probability in the training data from the probabilistic perspective, which is demonstrated to be a straightforward and effective technique.
Moreover, another common technique involves augmenting the minority classes by data augmentation techniques~\cite{He2016,Huang2017,Wang2021a,Chu2020,Zang2021,Li2021,Wang2021c,Wang2023,Huang2022a}.
MetaSAug~\cite{Li2021} employs meta-learning to estimate the variance of the feature distribution for each class and utilizes it as a semantic direction for augmenting single sample, which is inspired by implicit semantic data augmentation (ISDA)~\cite{Wang2021a}.
The ISDA employs a normal distribution to model unconstrained features for data augmentation and obtains an upper bound of the expected cross-entropy loss, which is related to our method.
Nevertheless, features' normalization makes direct modeling infeasible in contrastive learning with a normal distribution.
Hence, we adopt a mixture of von Mises-Fisher distributions on the unit sphere, allowing us to derive a closed-form of the expected contrastive loss rather than an upper bound.
The vMF distribution~\cite{Mardia2000} is a fundamental probability distribution on the unit hyper-sphere $\mathcal{S}^{p-1}$ in $\mathbb{R}^p$, which has been successfully used in deep metric learning~\cite{Zhe2019,Roth2022}, supervised learning~\cite{Scott2021,Li2021b}, and unsupervised learning~\cite{Banerjee2005}.
A recent study~\cite{wang2022towards} introduces a classifier that utilizes the von Mises-Fisher distribution to address long-tailed recognition problems. Although this approach exhibits similarities to our method in terms of employing the vMF distribution, it specifically emphasizes the quality of representation for classifiers and features, considering the distribution overlap coefficient.

\textbf{Contrastive Learning for Long-tailed Recognition.}
Recently, researchers have employed the contrastive learning to tackle the challenge of long-tailed recognition.
Contrastive learning is a self-supervised learning approach that leverages contrastive loss function to learn a more discriminative representation of the data by maximizing the similarity between positive and negative samples~\cite{Ting2020,JeanBastien2020,Tian2019,Kaiming2019}.
Khosla et al.~\cite{Khosla2020} extended the contrastive learning to the supervised contrastive learning (SCL) paradigm by incorporating label information.
However, due to the imbalance of positive and negative samples, contrastive learning also faces the problem that the model over-focusing on head categories in long-tailed recognition~\cite{Kang2020a,Fang2021,Cui2021}.
To balance the feature space, KCL~\cite{Kang2020a} uses the same number positive pairs for all the classes.
Recent studies~\cite{Li2022a,Cui2021,Zhu2022,Wang2021,Samuel2021,Cui2023} have proposed to introduce class complement for constructing positive and negative pairs.
These approaches ensure that all classes appear in every training iteration to re-balance the distribution of contrast samples.
A comprehensive comparison is provided in~\cref{sec:connection}.

Furthermore, recent advancements in multi-modal foundation models based on contrastive learning, such as CLIP~\cite{Radford2021}, have demonstrated remarkable generalization capabilities across various downstream tasks.
Inspired by this, researchers have begun to incorporate multi-modal foundational models into long-tail recognition tasks.
VL-LTR~\cite{Tian2022} develops a class-level visual-linguistic pre-training approach to associate images and textual descriptions at the class level and introduces a language-guided recognition head, effectively leveraging visual-linguistic representations for enhanced visual recognition.

\textbf{Knowledge Distillation for Long-tailed Recognition.}
Knowledge distillation involves training a student model using the outputs of a well-trained teacher model~\cite{Hinton2015}.
This approach has been increasingly applied to long-tailed learning.
For instance, LFME~\cite{Xiang2020} trains multiple experts on various, less imbalanced sample subsets (e.g., head, middle, and tail sets), subsequently distilling these experts into a unified student model.
In a similar vein, RIDE~\cite{Wang2020} introduces a knowledge distillation method to streamline the multi-expert model by developing a student network with fewer experts. 
Differing from the multi-expert paradigm, DiVE~\cite{He2021} demonstrates the efficacy of using a class-balanced model as the teacher for enhancing long-tailed learning. 
NCL~\cite{Li2022} incorporates two main components: Nested Individual Learning and Nested Balanced Online Distillation. NIL focuses on the individual supervised learning for each expert, while NBOD facilitates knowledge transfer among multiple experts.
Lastly, xERM~\cite{Zhu2022a} aims to develop an unbiased, test-agnostic model for long-tailed classification.
Grounded in causal theory, xERM seeks to mitigate bias by minimizing cross-domain empirical risk.

\textbf{Imbalanced Semi-supervised Learning (SSL).}
Semi-supervised learning is a subfield of machine learning that addresses scenarios where labeled training samples are limited, but an extensive amount of unlabeled data is available~\cite{Jia2021,Miyato2019,Kingma2014,Rasmus2015,Huang2022}.
This scenario is directly relevant to a multitude of practical problems where it is relatively expensive to produce labeled data.
The main approach in SSL is leveraging labeled data to generate pseudo-labels for unlabeled data, and then train the model with both pseudo-labeled and labeled data ~\cite{Sohn2020}.
In addition, consistency regularization or cluster assumption can be combined to further constrain the distribution of unlabeled data.
For long-tailed datasets, due to class imbalance, SSL methods will be biased towards head classes when generating pseudo-labels for unlabeled data.
Recently, researchers have proposed some methods to address the problem of pseudo-label generation in imbalanced SSL.
DARP~\cite{Kim2020} is proposed to softly refine the pseudo-labels generated from a biased model by formulating a convex optimization problem.
CReST~\cite{Wei2021} adopts an iterative approach to retrain the model by continually incorporating pseudo-labeled samples.
DASO~\cite{Oh2022} focuses on the unknown distribution of the unlabeled data and blends the linear and semantic pseudo-labels in different proportions for each class to reduce the overall bias.

\section{Method}
\subsection{Preliminaries}
In this subsection, we start by presenting the preliminaries, laying the basis for introducing our method. Consider a standard image recognition problem. Given the training set $\mathcal{D} = \{{(\bm{x}_i, y_i)}_{i=1}^N\}$, the model is trained to map the images from the space $\mathcal{X}$ into the classes from the space $\mathcal{Y} = \{1,2,\dots,K\}$. Typically, the mapping function $\varphi$ is modeled as a neural network, which consists of a backbone feature extractor $F\colon \mathcal{X}\rightarrow \mathcal{Z}$ and a linear classifier $G\colon \mathcal{Z}\rightarrow \mathcal{Y}$.

\textbf{Logit Adjustment}~\cite{Menon2021} is a loss margin modification method. It adopts the prior probability of each class as the margin during the training and inference process. The logit adjustment method is defined as:

\begin{equation}
    \label{eq:LA}
    \mathcal{L}_{\text{LA}}(\bm{x}_i,y_i)=-\log\frac{\pi_{y_i} e^{ \varphi_{y_i}(\bm{x}_i)}}{\sum\limits_{y' \in \mathcal{Y}} \pi_{y'} e^{\varphi_{y'}(\bm{x}_i)}},
\end{equation}
where $\pi_y$ is the class frequency in the training or test set, and $\varphi_{y}$ is the logits of the class $y$.

\textbf{Supervised Contrastive Learning (SCL)}~\cite{Khosla2020} is a generalization of the unsupervised contrastive learning method.
SCL is designed to learn a feature extractor $F$ that can distinguish between positive pairs $(\bm{x}_i, \bm{x}_j)$ with the same label $y_i = y_j$ and negative pairs $(\bm{x}_i, \bm{x}_j)$ with different labels $y_i \neq y_j$.
Given any batch of sample-label pairs $B=\{(\bm{x}_i,y_i)_{i=1}^{N^B}\}$ and a temperature parameter $\tau$, two typical ways to define the SCL loss are~\cite{Khosla2020}:
\begin{align}
    {\mathcal{L}}^{\text{sup}}_{\text{out}}(\bm{x}_i,y_i) &= \frac{-1}{N^B_{y_i}}\!\sum_{p\in A(y_i)}\!\!\log{\frac{e^{\bm{z}_i\cdot\bm{z}_p/\tau}}{\sum\limits_{j = 1}^K \sum\limits_{a\in A(j)}\!\!e^{\bm{z}_i\cdot\bm{z}_a/\tau}}}, \label{eq:SCL_out} \\
    {\mathcal{L}}^{\text{sup}}_{\text{in}}(\bm{x}_i,y_i) &= - \log\Bigg\{{\frac{1}{N^B_{y_i}}\sum_{p\in A(y_i)}\frac{e^{\bm{z}_i\cdot\bm{z}_p/\tau}}{\sum\limits_{j = 1}^K \sum\limits_{a\in A(j)}e^{\bm{z}_i\cdot\bm{z}_a/\tau}}}\Bigg\}, \label{eq:SCL_in}
\end{align}
where $A(j)$ is the set of indices of the instances in the batch $B\setminus\{({x}_i, y_i)\}$ with the same label $j$, $N^{B}_{y_i}=\vert A(y_i)\vert$ is its cardinality, and $\bm{z}$ denotes the normalized features of $\bm{x}$ extracted by $F$:
\begin{equation*}
    \bm{z}_i = \frac{F(\bm{x}_i)}{\|F(\bm{x}_i)\|}, \quad \bm{z}_p = \frac{F(\bm{x}_p)}{\|F(\bm{x}_p)\|}, \quad \bm{z}_a = \frac{F(\bm{x}_a)}{\|F(\bm{x}_a)\|}.
\end{equation*}
In addition, ${\mathcal{L}}^{\text{sup}}_{\text{out}}$ and $ {\mathcal{L}}^{\text{sup}}_{\text{in}}$ denote the sum over the positive pairs relative to the location of the $\log$.
As demonstrated in~\cite{Khosla2020}, the two loss formulations are not equivalent and Jensen's inequality~\cite{Jensen1906} implies that $\mathcal{L}^{\text{sup}}_{\text{in}} \leq  \mathcal{L}^{\text{sup}}_{\text{out}}$.
Therefore, SCL adopts the latter as the loss function, since it is an upper bound of the former.

\subsection{Probabilistic Contrastive Learning}
\label{sec:probabilistic_contrastive_learning}

As aforementioned, for any example in a batch, SCL considers other examples with the same labels as positive samples, while the rest are viewed as negative samples.
Consequently, it is essential for the batch to contain an adequate amount of data to ensure each example receives appropriate supervision signals.
Nevertheless, this requirement is inefficient as a larger batch size often leads to significant computational and memory burdens.
Furthermore, in practical machine learning scenarios, the data distribution typically exhibits a long-tail pattern, with infrequent sampling of the tail classes within the mini-batches.
This particular characteristic necessitates further enlargement of the batches to effectively supervise the tail classes.

To address this issue, we propose a novel probabilistic contrastive (\name) learning algorithm that estimates the feature distribution and samples from it to construct contrastive pairs. Our method is inspired by~\cite{Li2021,Wang2021a},  which employ normal distribution to model unconstrained features from the perspective of data augmentation and obtain an upper bound of the expected loss for optimization. However, the features in contrastive learning are constrained to the unit hypersphere, which is not suitable for directly modeling them with a normal distribution.
Moreover, due to the imbalanced distribution of training data, it is infeasible to estimate the distribution parameters of all classes in a small batch.
Therefore, we introduce a simple and reasonable von Mises-Fisher distribution defined on the hypersphere, whose parameters can be efficiently estimated by maximum likelihood estimation across different batches. Furthermore, we rigorously derive a closed form of expected SupCon loss rather than an upper bound for efficient optimization and apply it to semi-supervised learning.

\textbf{Distribution Assumption.}
As previously mentioned, the features in contrastive learning are constrained to be on the unit hypersphere.
Therefore, we assume that the features follow a mixture of von Mises–Fisher (vMF) distributions~\cite{Mardia2000}, which is often regarded as a generalization of the normal distribution to the hypersphere.
The probability density function of the vMF distribution for the random $p$-dimensional unit vector $\bm{z}$ is given by:
\begin{align}
  \label{eq:vMF}
f_{p}(\bm{z} ;{\bm {\mu }},\kappa ) &= \frac{1}{C_{p}(\kappa )}e^{{\kappa {\bm {\mu }}^{\top} \bm {z} }}, \\
  \label{eq:vMF_const}
 C_{p}(\kappa ) &= {\frac {(2\pi )^{p/2}I_{(p/2-1)}(\kappa )}{\kappa ^{p/2-1}}},
\end{align}
where $\bm z$ is a $p$-dimensional unit vector, ${\kappa \geq 0,\left\Vert {\bm {\mu }}\right\Vert_2 =1}$ and 
$I_{(p/2-1)}$ denotes the modified Bessel function of the first kind at order $p/2-1$, which is defined as:
\begin{equation}
    \label{eq:Bessel}
    I_{(p/2-1)}(z) = \sum_{k=0}^{\infty }{\frac{1}{k!\Gamma (p/2-1+k+1)}} (\frac{z}{2})^{2k+p/2-1}.
\end{equation}
The parameters $\bm {\mu} $ and $\kappa$ are referred to as the mean direction and concentration parameter, respectively.
A higher concentration around the mean direction $\bm{\mu}$ is observed with greater $\kappa$, and the distribution becomes uniform on the sphere when $\kappa=0$.

\textbf{Parameter Estimation.}
Under the above assumption, we employ a mixture of vMF distributions to model the feature distribution:
\begin{equation}
    P(\bm{z}) = \sum_{{y}=1}^{K} P(y) P(\bm{z}|y) = \sum_{{y}=1}^{K} \pi_{y} \frac{\kappa_{y}^{p/2-1}e^{\kappa_y\bm{\mu}_{y}^\top\bm{z}}}{(2\pi)^{p/2} I_{p/2-1}(\kappa_y)} \label{eq:vmf},
\end{equation}
where the probability of a class $y$ is estimated as $\pi_y$, which corresponds to the frequency of class $y$ in the training set.
The mean vector $\bm{\mu}_y$ and concentration parameter $\kappa_y$ of the feature distribution are estimated by maximum likelihood estimation.

Suppose that a series of $N$ independent unit vectors $\{(\bm{z}_i)_i^N\}$ on the unit hyper-sphere $\mathcal{S}^{p-1}$ are drawn from a vMF distribution of class $y$.
The maximum likelihood estimates of the mean direction ${\bm{\mu}_y}$ and concentration parameter $\kappa_y$ satisfy the following equations:
\begin{align}
    \bm{\mu}_y &={\bar{\bm{z}}}/{\bar {R}}, \label{eq:mle_mu} \\
    A_p(\kappa_y) &= \frac{I_{p/2}(\kappa_y)}{I_{p/2-1}(\kappa_y)} = \bar{R}, \label{eq:mle_kappa}
\end{align}
where $\bar {\bm{z}}={\frac {1}{N}}\sum _{i}^{N}\bm{z}_{i}$ is the sample mean and  $\bar {R}=\|{\bar {\bm{z}}}\|_2$ is the length of sample mean.
A simple approximation~\cite{Sra2012} to $\kappa_y$ is:
\begin{equation}
    \label{eq:mle_approx_kappa}
    {\hat {\kappa}_y}={\frac {{\bar {R}}(p-{\bar {R}}^{2})}{1-{\bar {R}}^{2}}}. 
\end{equation}

Furthermore, the sample mean of each class is estimated in an online manner by aggregating statistics from the previous mini-batch and the current mini-batch.
Specifically, we adopt the estimated sample mean of the previous epoch for maximum likelihood estimation, while maintaining a new sample mean from zero initialization in the current epoch through the following online estimation algorithm:
\begin{equation}
    \label{eq:online_mean}
\bar{\bm{{z}}}_j^{(t)} = \frac{n_j^{(t-1)}\bar{\bm{{z}}}_j^{(t-1)} + m_j^{(t)} \bar{\bm{z}}^{\prime(t)}_j} {n_j^{(t-1)} +m_j^{(t)}},
\end{equation}
where ${\bar{\bm{z}}_j}^{(t)}$ is the estimated sample mean of class $j$ at step $t$ and $\bar{\bm{z}}^{\prime(t)}_j$ is the sample mean of class $j$ in current mini-batch.
$n_j^{(t-1)}$ and $m_j^{(t)}$ represent the numbers of samples in the previous mini-batches and the current mini-batch, respectively.

\textbf{Loss Derivation.}
Built upon the estimated parameters, a straightforward approach may be sampling contrastive pairs from the mixture of vMF distributions.
However, we note that sampling sufficient data from the vMF distributions at each training iteration is inefficient. To this end, we leverage mathematical analysis to extend the number of samples to infinity and rigorously derive a closed-form expression for the expected contrastive loss function.

\begin{proposition}
    \label{prop:sample}
    Suppose that the parameters of the mixture of vMF distributions are $\pi_y$, $\bm{\mu}_y$, and $\kappa_y, y=1,\cdots,K$ and let the sampling number $N \to \infty$.
    Then we have the expected contrastive loss function, which is given by:

    \begin{align}
        {\mathcal{L}}_{\rm {out}}(\bm{z_i},y_i) &= \frac{-\bm{z}_i\cdot A_p(\kappa_{y_i}) \bm{\mu}_{y_i}}{\tau} + \log \bigg( {\sum\limits_{j = 1}^K \pi_j \frac{C_p(\tilde \kappa_j)}{C_p(\kappa_j)} } \bigg),  \label{eq:proco_out} \\
        {\mathcal{L}}_{\rm {in}}(\bm{z_i},y_i) &= -\log \bigg( {\pi_{y_i} \frac{C_p(\tilde \kappa_{y_i})}{C_p(\kappa_{y_i})} } \bigg) + \log \bigg( {\sum\limits_{j = 1}^K \pi_j \frac{C_p(\tilde \kappa_j)}{C_p(\kappa_j)} } \bigg), \label{eq:proco_in}
    \end{align}
    where $\tilde{\bm{z}}_j \sim {\rm vMF}(\bm{\mu}_{j}, {\kappa}_{j})$,
$\tilde \kappa_j = ||\kappa_{j} \bm{\mu}_j +  \bm{z}_i/\tau  ||_2$, $\tau$ is the temperature parameter.

\end{proposition}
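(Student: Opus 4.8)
The plan is to realize the right-hand sides of \eqref{eq:proco_out}--\eqref{eq:proco_in} as the almost-sure limits, as $N\to\infty$, of the empirical SCL losses $\mathcal{L}^{\text{sup}}_{\text{out}}$ and $\mathcal{L}^{\text{sup}}_{\text{in}}$ evaluated on a contrastive pool of $N$ examples drawn i.i.d.\ from the mixture $\sum_y \pi_y\,\mathrm{vMF}(\bm{\mu}_y,\kappa_y)$ --- equivalently, about $N\pi_j$ examples from $\mathrm{vMF}(\bm{\mu}_j,\kappa_j)$ for each class $j$ --- while the anchor $\bm{z}_i$ is held fixed. Since $\bm{z}_i$ and every sampled $\tilde{\bm{z}}$ are unit vectors, the summands $e^{\bm{z}_i\cdot\tilde{\bm{z}}/\tau}$ lie in $[e^{-1/\tau},e^{1/\tau}]$, so the strong law of large numbers applies to every per-class average, and continuity of $\log$ (whose argument stays bounded away from $0$) transports the convergence to the losses themselves. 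The whole computation then reduces to two expectations under a single vMF component.

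First I would establish the two required identities. (i) For any $\bm{v}\in\mathbb{R}^p$, $\mathbb{E}_{\tilde{\bm{z}}\sim \mathrm{vMF}(\bm{\mu},\kappa)}\big[e^{\bm{v}^\top\tilde{\bm{z}}}\big]=C_p(\|\kappa\bm{\mu}+\bm{v}\|_2)/C_p(\kappa)$. This is immediate once one observes that, because the vMF density integrates to $1$ and its normalizer depends only on the concentration, $\int_{\mathcal{S}^{p-1}}e^{\bm{w}^\top\bm{z}}\,d\bm{z}=C_p(\|\bm{w}\|_2)$ for every $\bm{w}$ (write $\bm{w}=\|\bm{w}\|_2\hat{\bm{w}}$ and recognize the $\mathrm{vMF}(\hat{\bm{w}},\|\bm{w}\|_2)$ density); applying this with $\bm{w}=\kappa\bm{\mu}+\bm{v}$ and $\bm{v}=\bm{z}_i/\tau$ produces exactly the quantity $\tilde\kappa_j=\|\kappa_j\bm{\mu}_j+\bm{z}_i/\tau\|_2$ in the statement. (ii) $\mathbb{E}_{\tilde{\bm{z}}\sim \mathrm{vMF}(\bm{\mu},\kappa)}[\tilde{\bm{z}}]=A_p(\kappa)\bm{\mu}$. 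Differentiating $\int_{\mathcal{S}^{p-1}}e^{\bm{w}^\top\bm{z}}\,d\bm{z}=C_p(\|\bm{w}\|_2)$ in $\bm{w}$ (justified on the compact sphere with a smooth integrand) and evaluating at $\bm{w}=\kappa\bm{\mu}$ gives $\int\bm{z}\,e^{\kappa\bm{\mu}^\top\bm{z}}\,d\bm{z}=C_p'(\kappa)\bm{\mu}$, hence $\mathbb{E}[\tilde{\bm{z}}]=\big(C_p'(\kappa)/C_p(\kappa)\big)\bm{\mu}$; the Bessel identity $\tfrac{d}{d\kappa}\big(\kappa^{-(p/2-1)}I_{p/2-1}(\kappa)\big)=\kappa^{-(p/2-1)}I_{p/2}(\kappa)$ then identifies $C_p'(\kappa)/C_p(\kappa)=I_{p/2}(\kappa)/I_{p/2-1}(\kappa)=A_p(\kappa)$, consistent with \eqref{eq:mle_kappa}.

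With these in hand I would pass to the limit term by term. For the partition function, grouping the pool by class and using (i), the per-sample denominator $\tfrac1N\sum_{j}\sum_{a\in A(j)}e^{\bm{z}_i\cdot\bm{z}_a/\tau}$ converges to $\sum_{j}\pi_j\,C_p(\tilde\kappa_j)/C_p(\kappa_j)$, which supplies the shared $\log(\cdot)$ term in both \eqref{eq:proco_out} and \eqref{eq:proco_in}. For $\mathcal{L}_{\text{out}}$ the positive part is an average of $-\bm{z}_i\cdot\bm{z}_p/\tau$ over class-$y_i$ samples, which by (ii) converges to $-\bm{z}_i\cdot A_p(\kappa_{y_i})\bm{\mu}_{y_i}/\tau$; combining yields \eqref{eq:proco_out}. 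For $\mathcal{L}_{\text{in}}$ the positive contribution inside the logarithm is the mass of class $y_i$, i.e.\ $\pi_{y_i}$ times the class-$y_i$ average of $e^{\bm{z}_i\cdot\bm{z}_p/\tau}$, which by (i) converges to $\pi_{y_i}\,C_p(\tilde\kappa_{y_i})/C_p(\kappa_{y_i})$; dividing by the total mass and taking $-\log$ gives \eqref{eq:proco_in}.

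I expect the main obstacle to be two points of care rather than the analysis. The first is identity (i): recognizing that the Gibbs-type expectation $\mathbb{E}[e^{\bm{v}^\top\tilde{\bm{z}}}]$ closes in terms of the \emph{same} normalizer $C_p$ evaluated at the shifted concentration $\tilde\kappa_j=\|\kappa_j\bm{\mu}_j+\bm{z}_i/\tau\|_2$ is the crux of the whole result (everything downstream is substitution), and it must be paired with the Bessel identity in (ii) to get $A_p(\kappa)$ in closed form. The second is bookkeeping around the $N\to\infty$ limit: the raw SCL denominator is a sum of $\Theta(N)$ terms and diverges, so one must fix the per-sample (equivalently per-class-mass) normalization consistently between numerator and denominator, and between the ``out'' and ``in'' variants, so that the finite surviving part is exactly the displayed formula; any residual $\log N$-type term is an additive constant independent of the feature extractor and is discarded.
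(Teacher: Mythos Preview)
Your proposal is correct and follows essentially the same route as the paper: rewrite the SCL losses so that the positive and denominator parts become per-class empirical averages, take $N\to\infty$ (dropping the additive $\log N$ constant), and evaluate the limits via the vMF mean $\mathbb{E}[\tilde{\bm z}]=A_p(\kappa)\bm\mu$ and moment-generating function $\mathbb{E}[e^{\bm t^\top\tilde{\bm z}}]=C_p(\|\kappa\bm\mu+\bm t\|_2)/C_p(\kappa)$. The only difference is that the paper simply cites these two vMF identities, whereas you additionally sketch their derivations and make the SLLN/boundedness justification explicit; the underlying argument is the same.
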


\begin{proof}

According to the definition of supervised contrastive loss in~\cref{eq:SCL_out}, we have

\begin{align}
 {\mathcal{L}}^{\text{sup}}_{\text{out}} &= \frac{-1}{N_{y_i}}\sum_{p\in A(y_i)} {\bm{z}_i\cdot\bm{z}_p/\tau} \notag \\
                                         &+ \log \bigg( {\sum\limits_{j = 1}^K N \frac{N_j}{N} \frac{1}{N_j} \sum\limits_{a\in A(j)}e^{\bm{z}_i\cdot\bm{z}_a/\tau}}\bigg),
\end{align}
where $N_j$ is the sampling number of class $j$ and satisfies $\lim_{N \to \infty} N_j/N = \pi_j$.

Let $N \to \infty$ and omit the constant term $\log N$, we have the following loss function:

\begin{align}
{\mathcal{L}}_{\text{out}} &= \frac{-\bm{z}_i\cdot \mathbb{E}[\tilde{\bm{z}}_{y_i}]}{\tau} + \log \bigg( {\sum\limits_{j = 1}^K \pi_j \mathbb{E}[ e^{\bm{z}_i\cdot \tilde{\bm{z}}_j/\tau}}] \bigg) \\
                                              &= \frac{-\bm{z}_i\cdot A_p(\kappa_{y_i}) \bm{\mu}_{y_i}}{\tau} + \log \bigg( {\sum\limits_{j = 1}^K \pi_j \frac{C_p(\tilde \kappa_j)}{C_p(\kappa_j)} } \bigg). \label{eq:proco_out_proof}
\end{align}

\cref{eq:proco_out_proof} is obtained by leveraging the expectation and moment-generating function of vMF distribution:
\begin{align}
    {\mathbb {E} \left(\mathbf{z}\right)} &= A_p(\kappa) \bm \mu, A_p(\kappa) = \frac{I_{p/2}(\kappa)}{I_{p/2-1}(\kappa)}, \label{eq:expectation} \\
    \mathbb {E} \left(e^{\mathbf {t} ^{\mathrm {T} }\mathbf {z} }\right) &= \frac{C_p(\tilde \kappa)}{C_p(\kappa)} , \tilde \kappa = ||\kappa \bm \mu + \mathbf{t} ||_2. \label{eq:moment}
\end{align}

Similar to~\cref{eq:proco_out}, we can obtain the other loss function from~\cref{eq:SCL_in} as follows:

\begin{equation}
    {\mathcal{L}}_{\text{in}} = -\log \bigg( {\pi_{y_i} \frac{C_p(\tilde \kappa_{y_i})}{C_p(\kappa_{y_i})} } \bigg) + \log \bigg( {\sum\limits_{j = 1}^K \pi_j \frac{C_p(\tilde \kappa_j)}{C_p(\kappa_j)} } \bigg). \label{eq:proco_in_proof}
\end{equation}

\end{proof}

Based on the above derivation, we obtain the expected formulations for the two SupCon loss functions.
Since $\mathcal{L}_{\text{in}}$ enforces the margin modification as shown in~\cref{eq:LA}, we adopt it as the surrogate loss:
\begin{equation}
    {\mathcal{L}}_{\text{\name}} = -\log \left( {\pi_{y_i} \frac{C_p(\tilde \kappa_{y_i})}{C_p(\kappa_{y_i})} } \right) + \log \left( {\sum\limits_{j = 1}^K \pi_j \frac{C_p(\tilde \kappa_j)}{C_p(\kappa_j)} } \right). \label{eq:proco}
\end{equation}
The empirical comparison of $\mathcal{L}_{\text{in}}$ and $\mathcal{L}_{\text{out}}$ is shown in~\cref{tab:comparison of loss}.

Instead of costly sampling operations, we implicitly achieve infinite contrastive samples through the surrogate loss and can optimize it in a much more efficient manner.
This design elegantly addresses the inherent limitations of the SCL, i.e., relying on large batch sizes (see~\cref{fig:representation branch}).
Furthermore, the assumption of feature distribution and the estimation of parameters can effectively capture the diversity of features among different classes, which enables our method to achieve stronger performance even without the sample-wise contrast as SCL (see~\cref{tab:ablation study}).

\textbf{Numerical Computation.}
Due to PyTorch only providing the GPU implementation of the zeroth and first-order modified Bessel functions, one approach for efficiently computing the high-order function in \name~is using the following recurrence relation:

\begin{equation}
I_{\nu+1}(\kappa) = \frac{2\nu}{\kappa} I_{\nu}(\kappa) - I_{\nu-1}(\kappa).
\end{equation}
However, this method exhibits numerical instability when the value of $\kappa$ is not sufficiently large.

Hence, we employ the Miller recurrence algorithm~\cite{abramowitz1964handbook}.
To compute $I_{p/2-1}(\kappa)$ in \name, we follow these steps: First, we assign the trial values $1$ and $0$ to $I_M(\kappa)$ and $I_{M+1}(\kappa)$, respectively. Here, $M$ is a chosen large positive integer, and in our experiments, we set $M=p$.
Then, using the inverse recurrence relation:
\begin{equation}
I_{\nu-1}(\kappa) = \frac{2\nu}{\kappa} I_{\nu}(\kappa) + I_{\nu+1}(\kappa),
\end{equation}
we can compute $I_{\nu}(\kappa)$ for $\nu = M-1, M-2, \cdots, 0$.
The value of $I_{p/2-1}(\kappa)$ obtained from this process is denoted as $\tilde{I}_{p/2-1}(\kappa)$, and $I_0(\kappa)$ is denoted as $\tilde{I}_0(\kappa)$.
Finally, we can then compute $I_{p/2-1}(\kappa)$ as follows:
\begin{equation}
I_{p/2-1}(\kappa) = \frac{I_0(\kappa)}{\tilde{I}_0(\kappa)} \tilde{I}_{p/2-1}(\kappa).
\end{equation}

\textbf{Overall Objective.}
Following the common practice in long-tailed recognition~\cite{Wang2021,Cui2021,Zhu2022}, we adopt a two-branch design.
The model consists of a classification branch based on a linear classifier $G(\cdot)$ and a representation branch based on a projection head $P(\cdot)$, which is an MLP that maps the representation to another feature space for decoupling with the classifier.
Besides, a backbone network $F(\cdot)$ is shared by the two branches.
For the classification branch and the representation branch, we adopt the simple and effective logit adjustment loss $\mathcal{L}_{\text{LA}}$ and our proposed loss $\mathcal{L}_{\text{\name}}$ respectively.
Finally, the loss functions of the two branches  are weighted and summed up as the overall loss function:
\begin{equation}
    \mathcal{L} = \mathcal{L}_{\text{LA}} + \alpha \mathcal{L}_{\text{\name}}, \label{eq:loss}
\end{equation}
where $\alpha$ is the weight of the representation branch.

In general, by introducing an additional feature branch during training, our method can be efficiently optimized with stochastic gradient descent (SGD) algorithm along with the classification branch and does not introduce any extra overhead during inference.

\textbf{Compatibility with Existing Methods.}
In particular, our approach is appealing in that it is a general and flexible framework. It can be easily combined with existing works applied to the classification branch, such as different loss functions, multi-expert framework, etc (see \cref{tab:imagenet-lt-resnet50}).

The pseudo-code of our algorithm is shown in~\cref{alg}.
\begin{center}
        \begin{algorithm}[ht]
            \caption{The \name~Algorithm.}
            \small
            \label{alg}
        \begin{algorithmic}[1]
            \STATE {\bfseries Input:} Training set $\mathcal{D}$, loss weight $\alpha$
            \STATE Randomly initialize
            the parameters $\bm{\Theta}$ of backbone $F$, projection head $P$ and classifier $G$
            \FOR{$t=0$ {\bfseries to} $T$}
            \STATE Sample a mini-batch $\{ \bm{x}_i, {y_i} \}_{i=1}^B$  from $\mathcal{D}$
            \STATE Compute $\bm{z}_{i} = \frac{P(F(\bm{x}_{i}))}{\|P(F(\bm{x}_{i}))\|} $ and $G(F(\bm{x}_{i}))$
            \STATE Estimate $\bm{\mu}$ and $\kappa$ according to~\cref{eq:mle_mu} and~\cref{eq:mle_approx_kappa}
            \STATE Compute ${\mathcal{L}}$
            according to~\cref{eq:loss}
            \STATE Update $\bm{\Theta}$ with SGD
            \ENDFOR
            \STATE {\bfseries Output:} $\bm{\Theta}$
        \end{algorithmic}
        \end{algorithm}
\end{center}

\subsection{Theoretical Error Analysis}
\label{sec:error_analysis}

To further explore the theoretical foundations of our approach,
we establish an upper bound on the generalization error and excess risk for the \name~loss, as defined in~\cref{eq:proco}.
For simplicity, our analysis focuses on the binary classification scenario, where the labels $y$ belong to the set $\{-1, +1\}$.

\begin{assumption}
    \label{assumption:p_tau}
    $p\tau \gg 1$, with $\tau$ representing the temperature parameter and $p$ the dimensionality of the feature space.
\end{assumption}

\begin{proposition}[Generalization Error Bound]
    \label{prop:gen_bound}
    Under the \cref{assumption:p_tau}, the following generalization bound is applicable with a probability of at least $1-\delta/2$.
    For every class $y \in \{-1,1\}$ and for estimated parameters $\hat{\bm{\mu}}$ and $\hat{\kappa}$, the bound is expressed as:
    \begin{align}
        & \mathbb{E}_{\bm{z}|y} \mathcal{L}_{\text{\name}}(y,\bm{z}; \hat{\bm{\mu}}, \hat{\kappa}) - \frac{1}{N_{y}}\sum_{i} \mathcal{L}_{\text{\name}}(y,\bm{z}_i; \hat{\bm{\mu}}, \hat{\kappa}) \notag \\
        \le &\sqrt{\frac{2}{N_y} \bm{w}^\top \Sigma_y \bm{w}  \ln \frac{2}{\delta}} + \frac{\ln (2/\delta)}{3N_{y}} \log (1 + e^{||\bm{w}||_2 - b y}).
    \end{align}
    The generalization bound across all classes, with a probability of at least $1-\delta$, is thus:
    \begin{align}
        \mathbb{E}_{(\bm{z},y)} \mathcal{L}_{\text{\name}}(y,\bm{z};\hat{\bm{\mu}}, \hat{\kappa}) \le & \sum_{y\in\{-1,1\}} \frac{P(y)}{N_y}\sum_{i} \mathcal{L}_{\text{\name}}(y,\bm{z}_i; \hat{\bm{\mu}}, \hat{\kappa}) \notag \\
        + \sum_{y\in\{-1,1\}}& \frac{P(y)\ln (2/\delta)}{3N_y} \log(1 + e^{||\bm{w}||_2 - b y}) \notag \\
        + \sum_{y\in\{-1,1\}}& P(y) \sqrt{\frac{2}{N_y} \bm{w}^\top (\Sigma_y )\bm{w}  \ln \frac{2}{\delta}},
    \end{align}
    where $N_y$ denotes the number of samples in class $y$, $\bm{w} = (\hat{\bm{\mu}}_{+1} - \hat{\bm{\mu}}_{-1})/\tau$,
    $b = \frac{1}{2\tau^2} (\frac{1}{\kappa_{+1}} - \frac{1}{\kappa_{-1}}) + \log \frac{\pi_{+1}}{\pi_{-1}}$,
    and $\Sigma_y$ is the covariance matrix of $\bm{z}$ conditioned on $y$. 
\end{proposition}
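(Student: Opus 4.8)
The plan is to reduce the \name~loss in the binary setting to a bias-shifted, norm-bounded logistic loss, and then derive the two displayed inequalities by a per-class Bernstein concentration argument followed by a union bound over $y\in\{-1,+1\}$. For the reduction: specializing \cref{eq:proco} to $K=2$ and merging the two logarithms gives $\mathcal{L}_{\text{\name}}(y,\bm z)=\log\!\bigl(1+e^{r_{-y}-r_{y}}\bigr)$, where $r_j:=\log\pi_j+\log C_p(\tilde\kappa_j)-\log C_p(\kappa_j)$ with $\kappa_j=\hat\kappa_j$ and $\tilde\kappa_j=\|\kappa_j\hat{\bm\mu}_j+\bm z/\tau\|_2$. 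Using the identity $\frac{d}{d\kappa}\log C_p(\kappa)=A_p(\kappa)$, which follows from differentiating $\int f_p\,d\bm z=1$ together with \cref{eq:expectation}, I would write $\log C_p(\tilde\kappa_j)-\log C_p(\kappa_j)=\int_{\kappa_j}^{\tilde\kappa_j}A_p(t)\,dt$ and combine it with the small-$1/\tau$ expansion $\tilde\kappa_j-\kappa_j=\hat{\bm\mu}_j^{\top}\bm z/\tau+\tfrac{1}{2\kappa_j\tau^2}\bigl(1-(\hat{\bm\mu}_j^{\top}\bm z)^2\bigr)+O(1/\tau^3)$. Under \cref{assumption:p_tau} the quadratic-in-$\bm z$ and higher-order remainders are negligible, so $r_{-y}-r_{y}$ collapses to the affine function $-y(\bm w^{\top}\bm z+b)$ with $\bm w=(\hat{\bm\mu}_{+1}-\hat{\bm\mu}_{-1})/\tau$ and $b=\tfrac{1}{2\tau^2}\bigl(\tfrac1{\kappa_{+1}}-\tfrac1{\kappa_{-1}}\bigr)+\log\tfrac{\pi_{+1}}{\pi_{-1}}$, whence $\mathcal{L}_{\text{\name}}(y,\bm z)=\log\bigl(1+e^{-y(\bm w^{\top}\bm z+b)}\bigr)=\mathrm{softplus}\bigl(-y(\bm w^{\top}\bm z+b)\bigr)$.

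Next, I would establish a per-class concentration bound. Fix $y$ and set $X_i:=\mathcal{L}_{\text{\name}}(y,\bm z_i;\hat{\bm\mu},\hat\kappa)$ for the $N_y$ samples of class $y$, which are i.i.d.\ with law $\bm z\mid y$. Two elementary estimates are needed: (i) since $\mathrm{softplus}$ is nondecreasing and $-y\bm w^{\top}\bm z\le\|\bm w\|_2$ by Cauchy--Schwarz and $\|\bm z\|_2=1$, we get $0\le X_i\le\log\bigl(1+e^{\|\bm w\|_2-by}\bigr)=:M_y$; (ii) since $\mathrm{softplus}$ is $1$-Lipschitz and $\mathrm{Var}(U)\le\mathbb{E}[(U-c)^2]$ for any constant $c$, choosing $c$ to be the $\mathrm{softplus}$-image of $\bm w^{\top}\mathbb{E}[\bm z\mid y]$ gives $\mathrm{Var}(X_i)\le\mathbb{E}\bigl[(\bm w^{\top}\bm z-\bm w^{\top}\mathbb{E}[\bm z\mid y])^2\bigr]=\bm w^{\top}\Sigma_y\bm w$. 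Applying Bernstein's inequality to $\tfrac1{N_y}\sum_i X_i$ with variance proxy $\bm w^{\top}\Sigma_y\bm w$ and range $M_y$, and inverting at confidence level $1-\delta/2$, yields the first displayed bound.

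Finally, a union bound makes the two per-class inequalities (for $y=+1$ and $y=-1$) hold simultaneously with probability at least $1-\delta$; since $\mathbb{E}_{(\bm z,y)}\mathcal{L}_{\text{\name}}=\sum_{y}P(y)\,\mathbb{E}_{\bm z\mid y}\mathcal{L}_{\text{\name}}$, multiplying the $y$-th per-class inequality by $P(y)$ and summing over $y$ gives exactly the second displayed inequality. The main obstacle is the reduction in the first paragraph: controlling the Bessel-function ratio $A_p(\kappa)$ and the expansion remainder precisely enough, in the regime $p\tau\gg1$, to certify that $\mathcal{L}_{\text{\name}}$ equals the logistic loss with \emph{exactly} the stated $\bm w$ and $b$ rather than only up to an uncontrolled additive term; once that is in place, the boundedness and variance estimates and the Bernstein/union-bound steps are standard.
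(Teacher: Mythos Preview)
Your proposal is correct and follows essentially the same approach as the paper: reduce $\mathcal{L}_{\text{\name}}$ to the binary logistic loss $\log(1+e^{-y(\bm w^\top\bm z+b)})$ under \cref{assumption:p_tau}, bound the per-class variance by $\bm w^\top\Sigma_y\bm w$ via the $1$-Lipschitz property of $\mathrm{softplus}$, apply a Bernstein/Bennett-type inequality with range $\log(1+e^{\|\bm w\|_2-by})$, and finish with a union bound over $y\in\{-1,+1\}$. The only cosmetic difference is in the asymptotic reduction: the paper invokes the large-argument Bessel asymptotic $I_{p/2-1}(\kappa)\sim e^\kappa/\sqrt{2\pi\kappa}$ directly to approximate $C_p(\tilde\kappa_j)/C_p(\kappa_j)$, whereas you use the identity $\tfrac{d}{d\kappa}\log C_p(\kappa)=A_p(\kappa)$ together with the expansion of $\tilde\kappa_j-\kappa_j$; both routes land on the same affine exponent and the same $\bm w,b$.
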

In our experimental setting, $\tau \approx 0.1$ and $p > 128$, thus \cref{assumption:p_tau} is reasonable in practice.
\cref{prop:gen_bound} indicates that the generalization error gap is primarily controlled by the sample size and the data distribution variance.
This finding corresponds to the insights from~\cite{Jitkrittum2022, Menon2021},
affirming that our method does not introduce extra factors in the error bound, nor does it expand the error bound.
This theoretically assures the robust generalizability of our approach.

Furthermore, our approach relies on certain assumptions regarding feature distribution and parameter estimation.
To assess the influence of these parameters on model performance, we derive an excess risk bound.
This bound measures the deviation between the expected risk using estimated parameters and the Bayesian optimal risk, which is the expected risk with parameters of the ground-truth underlying distribution.

\begin{assumption}
    \label{assumption:distribution}
    The feature distribution of each class follows a von Mises-Fisher (vMF) distribution, characterized by parameters $\bm{\mu}^\star$ and $\kappa^\star$.
\end{assumption}

\begin{proposition}[Excess Risk Bound]
    \label{prop:excess_risk}
    Given \cref{assumption:p_tau,assumption:distribution}, the following excess risk bound holds:
    \begin{align}
        & \mathbb{E}_{(\bm{z},y)} \mathcal{L}_{\text{\name}}(y,\bm{z};\hat{\bm{\mu}}, \hat{\kappa}) - \mathbb{E}_{(\bm{z},y)} \mathcal{L}_{\text{\name}}(y,\bm{z};{\bm{\mu}}^\star, \kappa^\star) \notag \\
        =& \mathcal{O}(\Delta\bm{\mu}+\Delta\frac{1}{\kappa}),
    \end{align}
    where $\Delta\bm{\mu} = \hat{\bm{\mu}} - {\bm{\mu}}^\star$, $\Delta\frac{1}{\kappa} = \frac{1}{\hat{\kappa}} - \frac{1}{\kappa^\star}$.
\end{proposition}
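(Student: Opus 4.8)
The data distribution over $(\bm{z},y)$ is fixed and, by \cref{assumption:distribution}, equals the mixture of vMF laws with parameters $(\bm{\mu}^\star,\kappa^\star)$; hence the excess risk is the variation of the \emph{deterministic} map $R(\bm{\mu},\kappa):=\mathbb{E}_{(\bm{z},y)}\,\mathcal{L}_{\text{\name}}(y,\bm{z};\bm{\mu},\kappa)$ when its parameter argument moves from $(\bm{\mu}^\star,\kappa^\star)$ to $(\hat{\bm{\mu}},\hat{\kappa})$. I would prove the bound in two steps: (i) invoke \cref{assumption:p_tau} to reduce $\mathcal{L}_{\text{\name}}$ to a simple closed form in which the concentration enters only through $1/\kappa$; and (ii) carry out a first-order (Lipschitz) perturbation analysis of $R$ in its parameters.

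\textbf{Step (i): reduction of the loss.} Since $\tilde{\kappa}_j=\|\kappa_j\bm{\mu}_j+\bm{z}/\tau\|_2=\kappa_j\sqrt{1+2\bm{\mu}_j^\top\bm{z}/(\kappa_j\tau)+1/(\kappa_j\tau)^2}=\kappa_j+\bm{\mu}_j^\top\bm{z}/\tau+\mathcal{O}\bigl(1/(\kappa_j\tau^2)\bigr)$, feeding this into the large-argument asymptotics of the Bessel-function ratios $A_p(\kappa)$ and $C_p(\tilde\kappa)/C_p(\kappa)$ (valid under \cref{assumption:p_tau}) and substituting into \cref{eq:proco}, the two-term log-sum-exp in the binary case collapses --- up to higher-order terms, by the same computation as in the proof of \cref{prop:gen_bound} --- to
\[
    \mathcal{L}_{\text{\name}}(y,\bm{z};\bm{\mu},\kappa)=\log\bigl(1+e^{-y(\bm{w}^\top\bm{z}-b)}\bigr),\qquad \bm{w}=\frac{\bm{\mu}_{+1}-\bm{\mu}_{-1}}{\tau},\qquad b=\frac{1}{2\tau^2}\Bigl(\frac{1}{\kappa_{+1}}-\frac{1}{\kappa_{-1}}\Bigr)+\log\frac{\pi_{+1}}{\pi_{-1}}.
\]
Thus the entire parameter dependence is carried by the pair $(\bm{w},b)$, with $b$ depending on the concentrations only through $1/\kappa_{\pm1}$; this is the only place \cref{assumption:p_tau} is needed.

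\textbf{Step (ii): perturbation bound.} Regarding $R$ as a function of $(\bm{\mu}_{+1},\bm{\mu}_{-1},1/\kappa_{+1},1/\kappa_{-1})$, note that $t\mapsto\log(1+e^{-t})$ has derivative bounded by $1$, that $\|\bm{z}\|_2=1$, that $\bm{w}$ is linear in $(\bm{\mu}_{+1},\bm{\mu}_{-1})$ with Jacobian $\pm\tau^{-1}I$, and that $b$ is affine in $(1/\kappa_{+1},1/\kappa_{-1})$ with derivatives $\pm(2\tau^2)^{-1}$. Differentiating under the expectation (justified by dominated convergence, the integrand being Lipschitz with a constant bound) and applying the mean value theorem, the partial gradients obey $\|\partial_{\bm{\mu}_{\pm1}}R\|_2\le\tau^{-1}$ and $|\partial_{1/\kappa_{\pm1}}R|\le(2\tau^2)^{-1}$, whence
\[
    \bigl|R(\hat{\bm{\mu}},\hat{\kappa})-R(\bm{\mu}^\star,\kappa^\star)\bigr|\le\frac{\|\Delta\bm{\mu}_{+1}\|_2+\|\Delta\bm{\mu}_{-1}\|_2}{\tau}+\frac{|\Delta\tfrac{1}{\kappa_{+1}}|+|\Delta\tfrac{1}{\kappa_{-1}}|}{2\tau^2}=\mathcal{O}\bigl(\|\Delta\bm{\mu}\|_2+\bigl|\Delta\tfrac{1}{\kappa}\bigr|\bigr),
\]
with the implicit constant depending only on $\tau$. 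Equivalently, the first-order Taylor term of $R$ about $(\bm{\mu}^\star,\kappa^\star)$ is linear in $(\Delta\bm{\mu},\Delta\tfrac{1}{\kappa})$ and the remainder is quadratic; the linear term is genuine because $(\bm{\mu}^\star,\kappa^\star)$ is in general not a stationary point of $R$, matching the linear rate in the statement.

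\textbf{Main obstacle.} Step (ii) is routine bookkeeping once the loss is logistic; the delicate part is Step (i) --- making the Bessel-ratio asymptotics rigorous and uniform under $p\tau\gg1$, and verifying that the terms discarded in the reduction (those of relative order $1/(p\tau)$, and $\bm{z}$-dependent pieces such as $(\bm{\mu}_j^\top\bm{z})^2/(\kappa_j\tau^2)$) do not affect the conclusion, either because they vanish to first order at $(\bm{\mu}^\star,\kappa^\star)$ or because they are absorbed into the $\mathcal{O}(\cdot)$ constant.
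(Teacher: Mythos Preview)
Your proposal is correct and follows essentially the same route as the paper: first invoke the large-$\kappa$ (equivalently, $p\tau\gg1$) asymptotics of the Bessel ratios to reduce $\mathcal{L}_{\text{\name}}$ to a binary logistic loss in $(\bm{w},b)$ with $b$ depending on concentrations only through $1/\kappa_{\pm1}$ (this is exactly the paper's \cref{lem:asymp_expansion}), and then exploit the $1$-Lipschitz property of $t\mapsto\log(1+e^{-t})$ together with $\|\bm{z}\|_2=1$ to control the variation in parameters. The only cosmetic difference is that the paper bounds the loss \emph{pointwise} in $\bm{z}$ via the Lipschitz inequality and then passes the bound through the expectation, arriving at $\|\hat{\bm{w}}-\bm{w}^\star\|_2+|\hat b-b^\star|=\tfrac{1}{\tau}\|\Delta\bm{\mu}_{+1}-\Delta\bm{\mu}_{-1}\|_2+\tfrac{1}{2\tau^2}\bigl|\Delta\tfrac{1}{\kappa_{+1}}-\Delta\tfrac{1}{\kappa_{-1}}\bigr|$, whereas you differentiate the expected risk and apply the mean value theorem, obtaining the (slightly looser but equivalent in order) sum $\tfrac{1}{\tau}(\|\Delta\bm{\mu}_{+1}\|_2+\|\Delta\bm{\mu}_{-1}\|_2)+\tfrac{1}{2\tau^2}(|\Delta\tfrac{1}{\kappa_{+1}}|+|\Delta\tfrac{1}{\kappa_{-1}}|)$; both are $\mathcal{O}(\Delta\bm{\mu}+\Delta\tfrac{1}{\kappa})$.
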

\cref{assumption:distribution}~is the core assumption of our method.
Building upon this, \cref{prop:excess_risk}~demonstrates that the excess risk associated with our method is primarily governed by the first-order term of the estimation error in the parameters.

\subsection{\name~for Semi-supervised Learning}

In order to further validate the effectiveness of our method, we also apply \name~to semi-supervised learning.
\name~can be directly employed by generating pseudo-labels for unlabeled data, which can subsequently be utilized to estimate the distribution inversely.
In our implementation, we demonstrate that simply adopting a straightforward approach like FixMatch~\cite{Sohn2020} to generate pseudo-labels will result in superior performance.
FixMatch's main concept lies in augmenting unlabeled data to produce two views and using the model's prediction on the weakly augmented view to generate a pseudo-label for the strongly augmented sample.
Specifically, owing to the introduction of feature distribution in our method, we can compute the \name~loss of weakly augmented view for each class to represent the posterior probability $P(y|\mathbf{z})$, thus enabling the generation of pseudo-labels.

\begin{table}[t]
\caption{Comparison with contrastive learning methods for long-tailed recognition. CC denotes class complement, CR denotes calculable representation, and MM denotes margin modification.}
\centering
\small
\begin{tabular}{c|ccc} 
\toprule[1pt]
Method     & CC & CR & MM \\
\midrule 
SCL~\cite{Khosla2020} \& BF~\cite{Hou2022} \& KCL~\cite{Kang2020a}  & \xmark & \xmark & \xmark \\
 ELM~\cite{Jitkrittum2022}                      & \xmark & \xmark & \cmark \\
TSC~\cite{Li2022a}                             & \cmark & \cmark & \xmark \\
BCL~\cite{Zhu2022} \& Hybrid-PSC~\cite{Wang2021}& \cmark & \xmark & \xmark \\
PaCo~\cite{Cui2021} \& GPaCo~\cite{Cui2023}    & \cmark & \xmark & \cmark \\
DRO-LT~\cite{Samuel2021}                       & \cmark & \cmark & \xmark \\
\midrule
\cellcolor{lightgray!50}\name                 & \cellcolor{lightgray!50}\cmark & \cellcolor{lightgray!50}\cmark & \cellcolor{lightgray!50}\cmark \\
\bottomrule[1pt]
\end{tabular}
\label{tab:comparison of scl}
\end{table}
\subsection{Connection with Related Work}
\label{sec:connection}

In the following, we discuss the connections between our method and related works on contrastive learning for long-tailed recognition.
Recent studies proposed to incorporate class complement in the construction of positive and negative pairs.
These methods ensure that all classes appear in every iteration of training to rebalance the distribution of contrast samples.
Moreover, researchers also introduced margin modification in contrastive learning due to its effectiveness.
Therefore, we mainly discuss three aspects, namely class complement, calculable representation, and margin modification.
Class complement represents introducing global representations of each class in contrastive learning for constructing positive and negative samples.
Calculable representation corresponds to the class complements are computed from the features rather than learnable parameters.
Margin modification represents adjusting the contrastive loss according to the prior frequency of different classes in the training set as shown in~\cref{eq:LA}.

In our method, we introduce the class complement based on the feature distribution by estimation from the features, which is not a learnable parameter.
If the class representation is a learnable parameter $\bm{w}$ and we ignore the contrast between samples, then we have the following contrastive loss:
\begin{equation}
    \mathcal{L}(\bm{z}_{i}, y_{i}) = -\log \frac{e^{\bm{w}_{y_{i}}^\top \bm{z}_{i}/\tau}}{\sum_{j=1}^{N} e^{\bm{w}_{j}^\top \bm{z}_{i}/\tau}},
\end{equation}
where $\bm{w}_{j}$ and $\bm{z}_{i}$ are normalized to unit length, and $\tau$ is the temperature parameter.
This is equivalent to cosine classifier (normalized linear activation)~\cite{Wang2018}---a variant of cross-entropy loss, which has been applied to long-tailed recognition algorithms~\cite{Liu2022,Wang2021b,Zhu2022}.
Therefore, the sole introduction of the learnable parameter $\bm{w}$ is analogous to the role played by the weight in the classification branch, which is further validated by the empirical results in~\cref{tab:ablation study}.

The related works are summarized in~\cref{tab:comparison of scl}.
BatchFormer (BF)~\cite{Hou2022} is proposed to equip deep neural networks with the capability to investigate the sample relationships within each mini-batch. 
This is achieved by constructing a Transformer Encoder Network among the images, thereby uncovering the interrelationships among the images in the mini-batch.
BatchFormer facilitates the propagation of gradients from each label to all images within the mini-batch, a concept that mirrors the approach used in contrastive learning.

Embedding and Logit Margins (ELM)~\cite{Jitkrittum2022} proposes to enforce both embedding and logit margins, analyzing in detail the benefit of introducing margin modification in contrastive learning.
TSC~\cite{Li2022a} introduces class representation by pre-generating and online-matching uniformly distributed targets to each class during the training process.
However, the targets do not have class semantics.
Hybrid-PSC~\cite{Wang2021}, PaCo~\cite{Cui2021}, GPaCo~\cite{Cui2023} and BCL~\cite{Zhu2022} all introduce learnable parameters as class representation.
PaCo also enforces margin modification when constructing contrastive samples.
DRO-LT~\cite{Samuel2021} computes the centroid of each class and utilizes it as the class representation, which is the most relevant work to ours.
The loss function and uncertainty radius in DRO-LT are devised by heuristic design from the metric learning and robust perspective.
Moreover, DRO-LT considers a sample and corresponding centroid as a positive pair.
But in contrast to SCL, the other samples in the batch and the centroid are treated as negative pairs, disregarding the label information of the other samples, which is somewhat pessimistic.

Compared with the above methods, \name~is derived rigorously from contrastive loss and a simple assumption on feature distribution.
Furthermore, \name~also enforces margin modification, which is a key component for long-tailed recognition.
\cref{sec:supervised} demonstrates the superiority of \name~over the above methods.

\section{Experiment}

In this section, we validate the effectiveness of our method on supervised/semi-supervised learning.
First, we conduct a range of analytical experiments to confirm our hypothesis and analyze each component of the method, including 1) performance of the representation branch, 2) comparison of more settings, 3) comparison between two formulations of loss, 4) sensitivity analysis of hyper-parameters, 5) data augmentation strategies.
Subsequently, we compare our method with existing supervised learning methods on long-tailed datasets such as CIFAR-10/100-LT, ImageNet-LT, and iNaturalist 2018.
Finally, experiments on balanced datasets, semi-supervised learning, and long-tailed object detection tasks are conducted to confirm the broad applicability of our method.

\subsection{Dataset and Evaluation Protocol}

We perform long-tailed image classification experiments on four prevalent long-tailed image classification datasets: CIFAR-10/100-LT, ImageNet-LT, and iNaturalist.
Following~\cite{Liu2022,Kang2020}, we partition all categories into three subsets based on the number of training samples: Many-shot ($>100$ images), Medium-shot ($20-100$ images), and Few-shot ($<20$ images).
The top-1 accuracy is reported on the respective balanced validation sets.
In addition, we conducted experiments on balanced image classification datasets and long-tailed object detection datasets to verify the broad applicability of our method.
The effectiveness of instance segmentation was assessed using the mean Average Precision (AP$_m$) for mask predictions, calculated at varying Intersection over Union (IoU) thresholds ranging from 0.5 to 0.95, and aggregated across different categories.
The AP values for rare, common, and frequent categories are represented as AP$_r$, AP$_c$, and AP$_f$, respectively, while the AP for detection boxes is denoted as AP$_b$.

\textbf{CIFAR-10/100-LT.}\quad
CIFAR-10-LT and CIFAR-100-LT are the long-tailed variants of the original CIFAR-10 and CIFAR-100~\cite{krizhevsky2009learning} datasets, which are derived by sampling the original training set.
Following~\cite{Cao2019,Cui2019}, we sample the training set of CIFAR-10 and CIFAR-100 with an exponential function $N_j = N \times \lambda^j$, where $\lambda \in (0,1)$, $N$ is the size of the original training set, and $N_j$ is the sampling quantity for the $j$-th class.
The original balanced validation sets of CIFAR-10 and CIFAR-100 are used for testing.
The imbalance factor $\gamma=\text{max}(N_j)/\text{min}(N_j)$ is defined as the ratio of the number of samples in the most and the least frequent class.
We set $\gamma$ at typical values $10,50,100$ in our experiments.

\textbf{ImageNet-LT.}\quad ImageNet-LT is proposed in~\cite{Liu2022}, which is constructed by sampling a subset of ImageNet following the Pareto distribution with power value $\alpha_p=6$.
It consists of $115.8$k images from $1000$ categories with cardinality ranging from $1280$ to $5$.

\textbf{iNaturalist 2018.}\quad iNaturalist 2018~\cite{VanHorn2018} is a severely imbalanced large-scale dataset.
It contains $8142$ classes of $437.5$k images, with an imbalance factor $\gamma=500$ with cardinality ranging from $2$ to $1000$.
In addition to long-tailed image classification, iNaturalist 2018 is also utilized in fine-grained image classification.

\textbf{CUB-200-2011.}\quad The Caltech-UCSD Birds-200-2011~\cite{Wah2011} is a prominent resource for fine-grained visual categorization tasks. Comprising 11,788 images across 200 bird subcategories, it is split into two sets: 5,994 images for training and 5,794 for testing.

\textbf{LVIS v1.}\quad The Large Vocabulary Instance Segmentation (LVIS) dataset~\cite{Gupta2019} is notable for its extensive categorization, encompassing 1,203 categories with high-quality instance mask annotations. LVIS v1 is divided into three splits: a training set with 100,000 images, a validation (val) set with 19,800 images, and a test-dev set, also with 19,800 images. Categories within the training set are classified based on their prevalence as rare (1-10 images), common (11-100 images), or frequent (over 100 images).

\begin{figure}[t]
    \centering
    \includegraphics[width=0.8\linewidth]{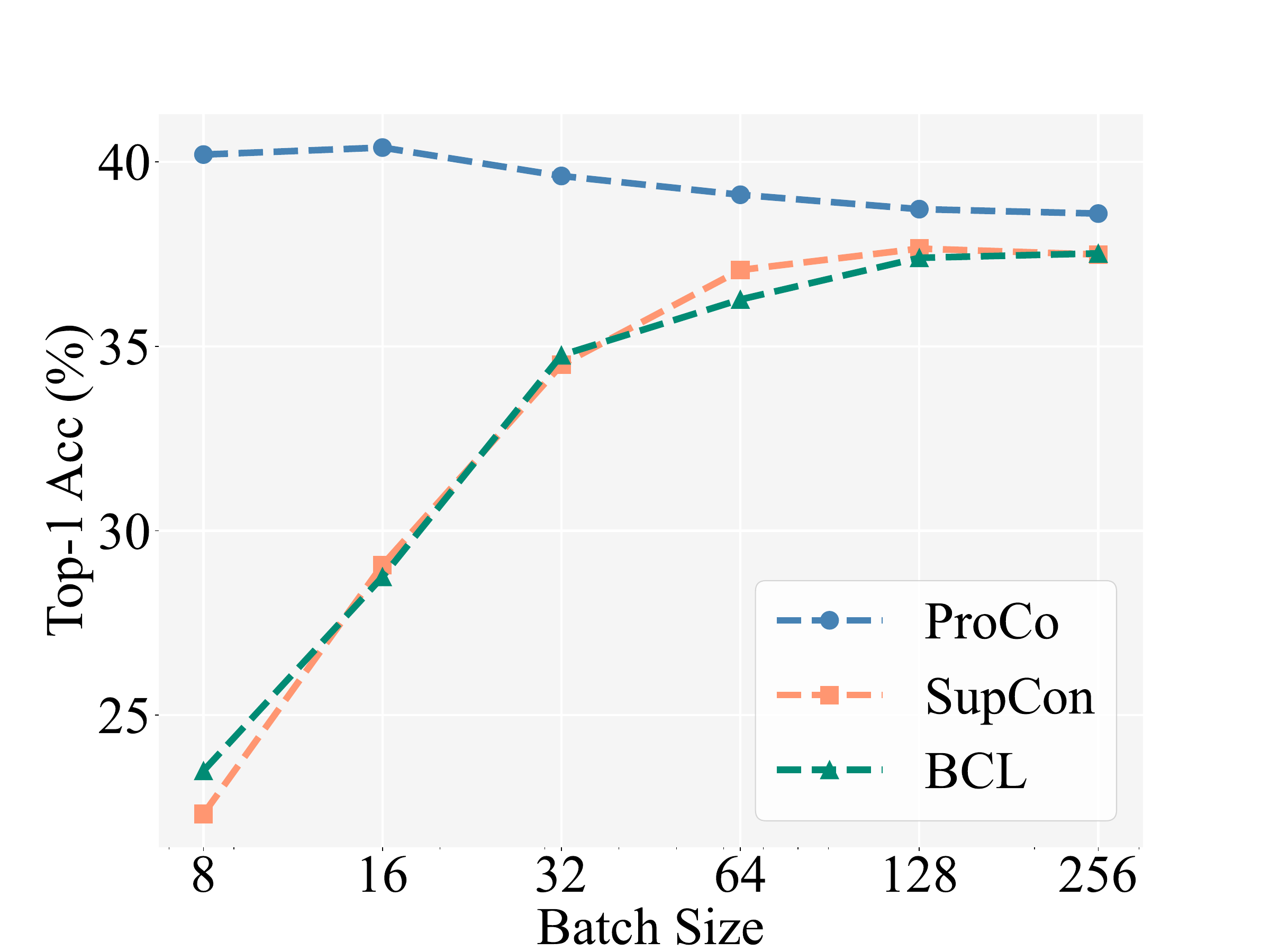}
    \caption{Performance of the representation branch. We train the model for 200 epochs.}
     \label{fig:representation branch}
\end{figure}

\begin{table}[t]
\caption{
    Comparison of different class complements. EMA denotes exponential moving average.
}

\centering
\small
\begin{tabular}{c|c}
\toprule[1pt]
Class Complement  &  Top-1 Acc.\\
\midrule
EMA Prototype   &  51.6\\
Centroid Prototype  &  52.0 \\
Normal Distribution & 52.1 \\
\midrule
\cellcolor{lightgray!50}\name          &  \cellcolor{lightgray!50}\textbf{52.8} \\
\bottomrule[1pt]
\end{tabular}
\label{tab:comparison of class representation}
\end{table}

\begin{table}[t]
    \caption{Comparison of more settings. SC denotes supervised contrastive loss, CC denotes class complement, CR denotes calculable representation, and MM denotes margin modification.
}
    \centering
    \small
    \begin{tabular}{c c c c|c}
        \toprule
         SC  &  CC  & CR & MM  &  Top-1 Acc.  \\
        \midrule
        \xmark &\xmark  & \xmark & \xmark & 50.5 \\
        \xmark &\cmark & \xmark & \xmark &  50.6 \\
        \cmark &\xmark  & \xmark & \xmark & 51.8 \\
        \cmark &\cmark & \xmark & \xmark  & 51.9 \\
        \midrule
        \cmark &\cmark  & \cmark & \cmark & 52.6\\
        \cellcolor{lightgray!50}\xmark &\cellcolor{lightgray!50}\cmark & \cellcolor{lightgray!50}\cmark & \cellcolor{lightgray!50}\cmark & \cellcolor{lightgray!50}\textbf{52.8} \\
        \bottomrule
\end{tabular}
\label{tab:ablation study}
\end{table}

\begin{table}[t]
    \caption{Comparison of different loss formulations.}
    
    \centering
    \small
    \begin{tabular}{c|ccc}
    \toprule[1pt]
    Imbalance Factor                      & 100   & 50    & 10 \\
    \midrule
    $\mathcal{L}_\text{out}$ & 52.0 & 56.6 & 65.1 \\
    \cellcolor{lightgray!50}$\mathcal{L}_\text{in}$  & \cellcolor{lightgray!50}\textbf{52.8} & \cellcolor{lightgray!50}\textbf{57.1} & \cellcolor{lightgray!50}\textbf{65.5} \\
    \bottomrule[1pt]
    \end{tabular}
    \label{tab:comparison of loss}
\end{table}

\begin{figure}[t]
    \centering
    \includegraphics[width=0.8\linewidth]{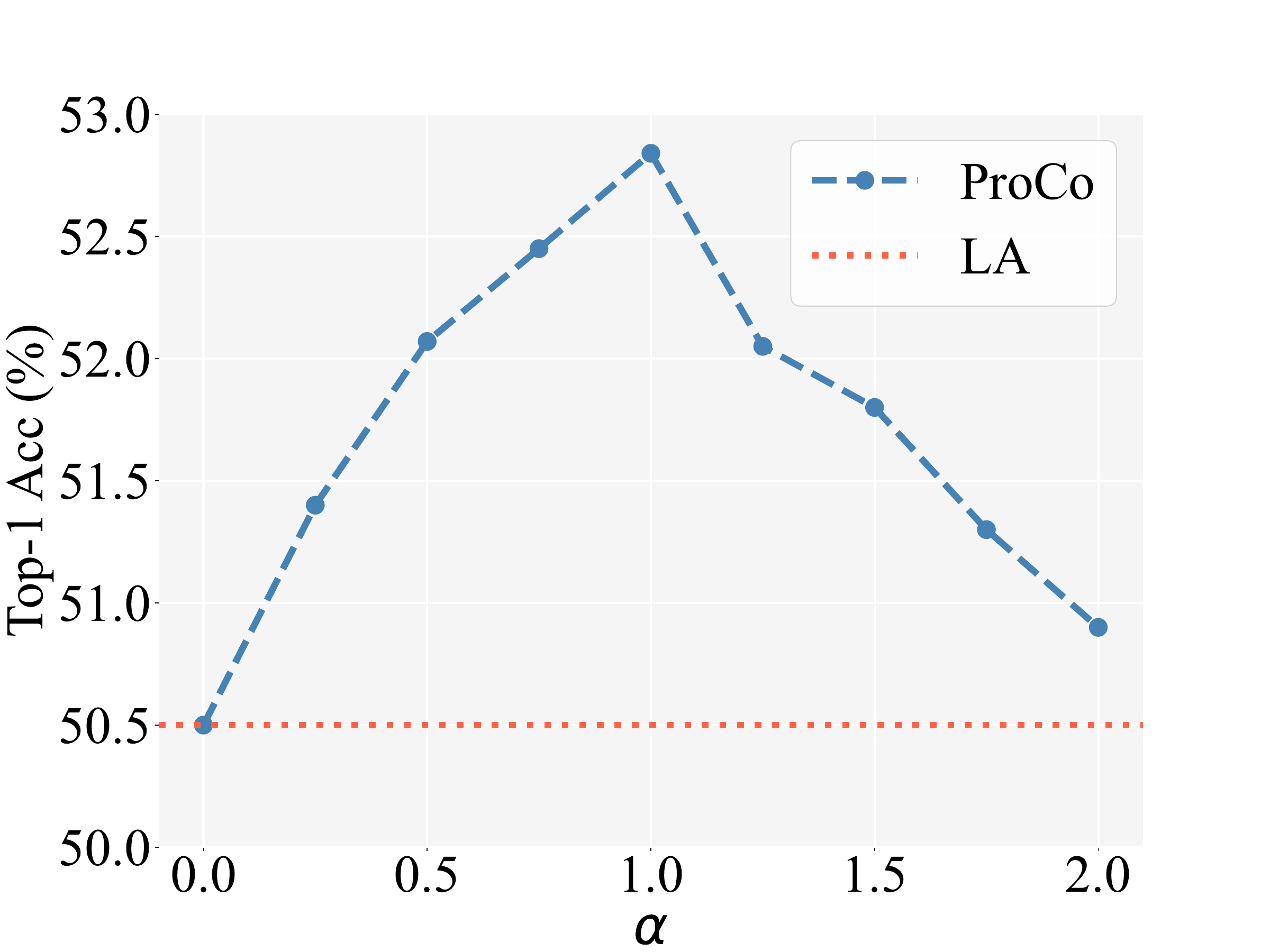}
    \caption{Parameter analysis of the loss weight ratio $\alpha$. LA denotes the logit adjustment method.}
    \label{fig:param}
\end{figure}

\begin{table}[t]
    \caption{Comparison of training the network  with ('w/') and without ('w/o') employing AutoAugment .}
    
    \centering
    \small
    \begin{tabular}{l|c|c}
    \toprule[1pt]
    Method           & w/o AutoAug & w/ AutoAug\\
    \midrule
    Logit Adj.~\cite{Menon2021}      & 49.7       & 50.5 \\
    BCL~\cite{Zhu2022}             & 50.1       & 51.9 \\
    \midrule
    \cellcolor{lightgray!50}\name                         & \cellcolor{lightgray!50}\textbf{50.5}       & \cellcolor{lightgray!50}\textbf{52.8} \\
    \bottomrule[1pt]
    \end{tabular}
    \label{tab:comparison of augmentations}
\end{table}

\begin{table*}[t]
    \centering
    \small
    \tabcolsep=0.5cm
    \caption{Top-1 accuracy of ResNet-32 on CIFAR-100-LT and CIFAR-10-LT.
    $*$ denotes results borrowed from~\cite{Zhou2020}.
    $\dagger$ denotes models trained in the same setting.
    We report the results of 200 epochs.
    }

    \resizebox{0.85\textwidth}{!}{
    \begin{tabular}{l|ccc|ccc}
    \toprule[1pt]
     Dataset                           & \multicolumn{3}{c|}{CIFAR-100-LT} & \multicolumn{3}{c}{CIFAR-10-LT}\\
     \midrule
     Imbalance Factor                  & 100            & 50             & 10             & 100            & 50             & 10\\
     \midrule
     CB-Focal~\cite{Cui2019}      & 39.6   & 45.2 & 58.0 & 74.6 & 79.3 & 87.5 \\
     LDAM-DRW$^*$~\cite{Cao2019} & 42.0   & 46.6 & 58.7 & 77.0 & 81.0 & 88.2 \\
     BBN~\cite{Zhou2020}          & 42.6   & 47.0 & 59.1 & 79.8 & 81.2 & 88.3 \\
     SSP~\cite{Yang2020}          & 43.4   & 47.1 & 58.9 & 77.8 & 82.1 & 88.5 \\
     TSC~\cite{Li2022}            & 43.8   & 47.4 & 59.0 & 79.7 & 82.9 & 88.7 \\
     Casual Model~\cite{Tang2020} & 44.1   & 50.3 & 59.6 & 80.6 & 83.6 & 88.5 \\
     Hybrid-SC~\cite{Wang2021}    & 46.7   & 51.9 & 63.1 & 81.4 & 85.4 & 91.1 \\
     MetaSAug-LDAM~\cite{Li2021}  & 48.0   & 52.3 & 61.3 & 80.7 & 84.3 & 89.7 \\
     ResLT~\cite{Cui2022}         & 48.2   & 52.7 & 62.0 & 82.4 & 85.2 & 89.7 \\
     Logit Adjustment$^\dagger$~\cite{Menon2021}  & 50.5   & 54.9 & 64.0 & 84.3 & 87.1 & 90.9 \\
     BCL$^\dagger$~\cite{Zhu2022}           & 51.9   & 56.4 & 64.6 & 84.5 & 87.2 & 91.1 \\
     \midrule
     \cellcolor{lightgray!50}\name$^\dagger$             & \cellcolor{lightgray!50}\textbf{52.8} & \cellcolor{lightgray!50}\textbf{57.1} & \cellcolor{lightgray!50}\textbf{65.5} & \cellcolor{lightgray!50}\textbf{85.9} & \cellcolor{lightgray!50}\textbf{88.2} & \cellcolor{lightgray!50}\textbf{91.9} \\
    \bottomrule[1pt]
    \end{tabular}
     \label{tab:cifar-lt}
}
\end{table*}

\begin{table}[t]
    \centering
    \small
    \caption{Top-1 accuracy of ResNet-32 on CIFAR-100-LT with an imbalance factor of 100.
    $\dagger$ denotes the models trained in the same setting
    }
   \begin{tabular}{l|ccc|c}
    \toprule[1pt]
     Method                             & Many   & Med        & Few    & All  \\
    \midrule
    \textit{200 epochs} \\
    DRO-LT~\cite{Samuel2021}              & 64.7   & 50.0          & 23.8   & 47.3\\
    RIDE~\cite{Wang2020}                  & {68.1} & 49.2          & 23.9   & 48.0\\
    Logit Adj.$^\dagger$~\cite{Menon2021} & 67.2   & 51.9          & 29.5   & 50.5\\
    BCL$^\dagger$~\cite{Zhu2022}          & 67.2   & \textbf{53.1} & {32.9} & {51.9}\\
    \midrule
    \cellcolor{lightgray!50}\name$^\dagger$                      & \cellcolor{lightgray!50}\textbf{69.0} & \cellcolor{lightgray!50}52.7   & \cellcolor{lightgray!50}\textbf{34.1} & \cellcolor{lightgray!50}\textbf{52.8}\\
    \midrule
    \textit{400 epochs} \\
    PaCo$^\dagger$~\cite{Cui2021}         & -             & -             & -             & 52.0\\
    GPaCo$^\dagger$~\cite{Cui2023}         & -             & -             & -             & 52.3\\
    Logit Adj.$^\dagger$~\cite{Menon2021} & 68.1          & 53.0          & 32.4          & 52.1\\
    BCL$^\dagger$~\cite{Zhu2022}          & 69.2          & 53.1          & 34.4          & 53.1\\
    \midrule
    \cellcolor{lightgray!50}\name$^\dagger$                      & \cellcolor{lightgray!50}\textbf{70.1} & \cellcolor{lightgray!50}\textbf{53.4} & \cellcolor{lightgray!50}\textbf{36.4} & \cellcolor{lightgray!50}\textbf{54.2}\\
    \bottomrule[1pt]
\end{tabular}
 \label{tab:cifar100-100}
\end{table}

\subsection{Implementation Details}

For a fair comparison of long-tailed supervised image classification, we strictly follow the training setting of~\cite{Zhu2022}.
All models are trained using an SGD optimizer with a momentum set to 0.9.

\textbf{CIFAR-10/100-LT.}\quad
We adopt ResNet-32~\cite{He2016} as the backbone network.
The representation branch has a projection head with an output dimension of $12$8 and a hidden layer dimension of $512$.
We set the temperature parameter $\tau$ for contrastive learning to $0.1$.
Following~\cite{Zhu2022,Cui2021}, we apply AutoAug~\cite{Cubuk2019} and Cutout~\cite{DeVries2017} as the data augmentation strategies for the classification branch, and SimAug~\cite{Ting2020} for the representation branch.
The loss weights are assigned equally ($\alpha=1$) to both branches.
We train the network for $200$ epochs with a batch size of $256$ and a weight decay of $4 \times 10^{-4}$.
We gradually increase the learning rate to $0.3$ in the first $5$ epochs and reduce it by a factor of $10$ at the 160th and 180th epochs.
Unless otherwise specified, our components analysis follows the above training setting.
For a more comprehensive comparison, we also train the model for $400$ epochs with a similar learning rate schedule, except that we warm up the learning rate in the first $10$ epochs and decrease it at 360th and 380th epochs.

In order to evaluate the effectiveness of \name~in semi-supervised learning tasks, we partition the training dataset into two subsets: a labeled set and an unlabeled set.
The division is conducted through a random elimination of the labels.
We predominantly adhere to the training paradigm established by DASO~\cite{Oh2022}.
DASO builds its semantic classifier by computing the average feature vectors of each class for generating pseudo-labels.
The semantic classifier in DASO is substituted with our representation branch for training.
The remaining hyperparameters are kept consistent with those in DASO.

\textbf{ImageNet-LT \& iNaturalist 2018.}\quad
For both ImageNet-LT and iNaturalist 2018, we employ ResNet-50~\cite{He2016} as the backbone network following the majority of previous works.
The representation branch consists of a projection head with an output dimension of $2048$ and a hidden layer dimension of $1024$.
The classification branch adopts a cosine classifier.
We set the temperature parameter $\tau$ for contrastive learning to $0.07$.
RandAug~\cite{Cubuk2020} is employed as a data augmentation strategy for the classification branch, and SimAug for the representation branch.
We also assign equal loss weight ($\alpha = 1$) to both branches.
The model is trained for $90$ epochs with a batch size of $256$ and a cosine learning rate schedule.
For ImageNet-LT, we set the initial learning rate to $0.1$ and the weight decay to $5 \times 10^{-4}$.
In addition,  the model is trained for $90$ epochs with ResNeXt-50-32x4d~\cite{xie2017aggregated} as the backbone network and for $180$ epochs with ResNet-50.
For iNaturalist 2018, we set the initial learning rate to $0.2$ and the weight decay to $1 \times 10^{-4}$.

\subsection{Components Analysis}

\textbf{Performance of Representation Branch.}
Since \name~is based on the representation branch, we first analyze the performance of a single branch.
\cref{fig:representation branch} presents the performance curves of different methods without a classification branch as the batch size changes.
We follow the two-stage training strategy and hyper-parameters of SCL~\cite{Khosla2020}.
In the first stage, the model is trained only through the representation branch.
In the second stage, we train a linear classifier to evaluate the model's performance.
From the results, we can directly observe that the performance of BCL~\cite{Zhu2022} and SupCon~\cite{Khosla2020} is significantly limited by the batch size.
However, ProCo effectively mitigates SupCon's limitation on the batch size by introducing the feature distribution of each class.
Furthermore, a comparison of different class complements is presented in~\cref{tab:comparison of class representation}.
Here, EMA and centroid prototype represent that the class representations are estimated by the exponential moving average and the centroid of the feature vectors, respectively.
Normal distribution represents that we employ the normal distributions to model the feature distribution of each class, even though they are in the normalized feature space.
Compared with merely estimating a class prototype, \name~yields superior results by estimating the distribution of features for each class.
Empirical results also confirm the theoretical analysis in~\cref{sec:probabilistic_contrastive_learning} that the vMF distribution is better suited for modeling the feature distribution than the normal distribution.

\textbf{Comparison of More Settings.}
The results are summarized in~\cref{tab:ablation study}, where we compare \name~with several other settings to clarify the effectiveness of relevant components.
As demonstrated in the table, the logit adjustment method fails to achieve improvement in performance when combined with a learnable class representation ($50.5\%$ vs $50.6\%$, $51.8\%$ vs $51.9\%$). 
This empirical evidence suggests a certain degree of equivalence between the two approaches.
Furthermore, incorporating \name~with SupCon loss~\cite{Khosla2020} yields slightly performance degradation ($52.6\%$ vs $52.8\%$).
This result emphasizes the effectiveness of the assumption on feature distribution and parameter estimation in capturing feature diversity.

\textbf{Loss Formulations.}
We analyze the impact of different loss formulations on the performance of \name.
These two formulations are derived from different SupCon losses~\cite{Khosla2020} as shown in~\cref{prop:sample}.
\cref{tab:comparison of loss} shows the results of different loss formulations on CIFAR-100-LT.
We can observe that $\mathcal{L}_\text{in}$ outperforms $\mathcal{L}_\text{out}$.

\textbf{Sensitivity Test.}
To investigate the effect of loss weight on model performance, we conducted experiments on CIFAR-100-LT dataset with an imbalance factor of 100.
The results are presented in~\cref{fig:param}.
It is evident that the model exhibits optimal performance when $\alpha$ is set to $1.0$. 
This suggests that the optimal performance can be achieved by setting the loss weight of both branches equally.
Therefore, we opt not to conduct an exhaustive hyper-parameter search for each dataset. 
Instead, we extrapolate the same loss weight to the other datasets.

\textbf{Data Augmentation Strategies.}
Data augmentation is a pivotal factor for enhancing model performance.
To investigate the impact of data augmentation, we conducted experiments on the CIFAR100-LT dataset with an imbalance factor of 100.
\cref{tab:comparison of augmentations} demonstrates the influence of AutoAug strategy.
Under strong data augmentation, our method achieves greater performance enhancement, suggesting the effectiveness of our approach in leveraging the benefits of data augmentation.

\subsection{Long-Tailed Supervised Image Classification}
\label{sec:supervised}

\textbf{CIFAR-10/100-LT.}\quad
\cref{tab:cifar-lt} presents the results of \name~and existing methods on the CIFAR-10/100-LT.
\name~demonstrates superior performance compared to other methods in handling varying levels of class imbalance, particularly in situations with higher imbalance factors.
With the same training scheme, we mainly compare with Logit Adj.~\cite{Menon2021} and BCL~\cite{Zhu2022}.
Combining the representation branch based on contrastive learning with the classification branch can significantly improve the performance of the model.
Our method further enhances the performance and effectively alleviates the problem of data imbalance.
Moreover, we report longer and more detailed results on the CIFAR-100-LT dataset with an imbalance factor of $100$ in~\cref{tab:cifar100-100}.
With $200$ and $400$ epochs, especially for tail classes, our method has a $1.2\%$ and $2.0\%$ improvement compared to BCL, respectively.
Meanwhile, our method also maintains an improvement on head categories.

\textbf{ImageNet-LT.}\quad
We report the performance of \name~on the ImageNet-LT dataset in~\cref{tab:imagenet-lt} compared with existing methods.
\name~surpasses BCL by $1.3\%$ on ResNet-50 and ResNeXt-50 with 90 training epochs, which demonstrates the effectiveness of our distribution-based class representation on datasets of large scale.
Furthermore, \cref{tab:imagenet-lt-resnet50} lists detailed results on more training settings for ImageNet-LT dataset.
\name~has the most significant performance improvement on tail categories.
In addition to combining with typical classification branches, \name~can also be combined with other methods to further improve model performance, such as different loss functions and model ensembling methods.
We also report the results of combining with NCL~\cite{Li2022}.
NCL is a multi-expert method that utilizes distillation and hard category mining.
\name~demonstrates performance improvements for NCL.

\textbf{iNaturalist 2018.}\quad
\cref{tab:imagenet-lt}~presents the experimental comparison of \name~with existing methods on iNaturalist 2018 over 90 epochs.
iNaturalist 2018 is a highly imbalanced large-scale dataset, thus making it ideal for studying the impact of imbalanced datasets on model performance.
Under the same training setting, \name~outperforms BCL by $1.7\%$.
Furthermore, to facilitate a comparison with state-of-the-art methodologies, an extended training schedule of 400 epochs is conducted.
The results in \cref{tab:imagenet-lt-resnet50}~indicate that \name~is capable of effectively scaling to larger datasets and longer training schedules.

\begin{table}[t]
\caption{Comparisons on ImageNet-LT and iNaturalist 2018 with different backbone networks. $\dagger$ denotes models trained in the same setting.}

\centering
\small

\begin{tabular}{l|cc|c}
\toprule[1pt]
\multirow{2}{*}{Method} &  \multicolumn{2}{c|}{ImageNet-LT} & iNaturalist 2018   \\
                          \cmidrule(r){2-4}
                                      & Res50         & ResX50        & Res50\\
\midrule
    \textit{90 epochs} \\
$\tau$-norm~\cite{Kang2020}      & 46.7          & 49.4          & 65.6\\
MetaSAug~\cite{Li2021}                 & 47.4          & --            & 68.8\\
SSP~\cite{Yang2020}                   & 51.3          & --            & 68.1\\
KCL~\cite{Kang2020}                    & 51.5          & --            & 68.6\\
DisAlign~\cite{Zhang2021}              & 52.9          & 53.4          & 69.5\\
vMF Classifier~\cite{wang2022towards}  &  --           & 53.7          & --\\
SSD~\cite{Li2021a}                     & --            & 53.8          & 69.3\\
ICCL~\cite{Tiong2023}                  & --            & 54.0          & 70.5\\
ResLT~\cite{Cui2022}                   & --            & 56.1          & 70.2\\
GCL~\cite{MengkeLi2022}                & 54.9          & --            & 72.0\\
RIDE~\cite{Wang2020}                   & 54.9          & 56.4          & 72.2\\
Logit Adj.$^\dagger$~\cite{Menon2021}  & 55.1          & 56.5          & --  \\
BCL$^\dagger$~\cite{Zhu2022}           & 56.0          & 56.7          & 71.8\\
\midrule
\cellcolor{lightgray!50}\name$^\dagger$                       & \cellcolor{lightgray!50}\textbf{57.3} & \cellcolor{lightgray!50}\textbf{58.0} & \cellcolor{lightgray!50}\textbf{73.5}\\
 
\bottomrule[1pt]
\end{tabular}

\label{tab:imagenet-lt}
\end{table}

\begin{table}[t]
    \centering
    \small
    \caption{Top-1 accuracy of ResNet-50 on ImageNet-LT and iNaturalist 2018. \\ $\dagger$ and $\ddagger$ denote models trained in the same settings.}

    \resizebox{0.98\linewidth}{!}{
   \begin{tabular}{l|ccc|c}
    \toprule[1pt]
    Method                              & Many   & Med & Few    & All \\
    \midrule
    \textit{180 epochs, ImageNet-LT} \\
    Logit Adj.$^\dagger$~\cite{Menon2021}  & 65.8          & 53.2          & 34.1          & 55.4\\
PaCo$^\dagger$~\cite{Cui2021}                   & 64.4          & \textbf{55.7}          & 33.7          & 56.0\\
   BCL$^\dagger$~\cite{Zhu2022}              & 67.6          & 54.6          & 36.8          & 57.2 \\
   NCL$^\ddagger$~\cite{Li2022}               & 68.2          & 53.9          & 36.3          & 57.0 \\
   NCL (ensemble)$^\ddagger$~\cite{Li2022}    & 69.1          & 56.4          & 38.9          & 59.2 \\
    \midrule
   \cellcolor{lightgray!50}\name$^\dagger$                          & \cellcolor{lightgray!50}\textbf{68.2} & \cellcolor{lightgray!50}{55.1} & \cellcolor{lightgray!50}\textbf{38.1} & \cellcolor{lightgray!50}\textbf{57.8} \\
   \cellcolor{lightgray!50}\name+NCL$^\ddagger$                      & \cellcolor{lightgray!50}\textbf{68.4} & \cellcolor{lightgray!50}\textbf{54.9} & \cellcolor{lightgray!50}\textbf{38.6} & \cellcolor{lightgray!50}\textbf{57.9} \\
   \cellcolor{lightgray!50}\name+NCL (ensemble)$^\ddagger$           & \cellcolor{lightgray!50}\textbf{70.6} & \cellcolor{lightgray!50}\textbf{57.4} & \cellcolor{lightgray!50}\textbf{40.8} & \cellcolor{lightgray!50}\textbf{60.2} \\
   \midrule
   \textit{400 epochs, iNaturalist 2018} \\
   PaCo$^{\dagger}$~\cite{Cui2021}           & 70.3          & 73.2      & 73.6   & 73.2  \\
   BatchFormer$^\dagger$~\cite{Hou2022}      & 72.8       & 75.3      & 75.3    & 75.1 \\
   GPaCo$^{\dagger}$~\cite{Cui2023}          & 73.0      &  75.5      & 75.7     & 75.4  \\
    \midrule
   \cellcolor{lightgray!50}\name$^\dagger$   & \cellcolor{lightgray!50}\textbf{74.0} & \cellcolor{lightgray!50}\textbf{76.0} &  \cellcolor{lightgray!50}\textbf{76.0} & \cellcolor{lightgray!50}\textbf{75.8} \\
   \bottomrule[1pt]
\end{tabular}
}

    \label{tab:imagenet-lt-resnet50}
\end{table}

\begin{table}[t]
    \centering
    \small
    \caption{Top-1 accuracy for ResNet-32 and ResNet-50 on balanced datasets. The ResNet-32 model is trained on CIFAR-100/10 for 200 epochs from scratch. For CUB-200-2011, the pre-trained ResNet-50 model is fine-tuned for 30 epochs.}

\resizebox{.9\linewidth}{!}{
    \begin{tabular}{l|ccc}
    \toprule[1pt]
                               &              CIFAR-100       & CIFAR-10  & CUB-200-2011 \\
     \midrule
        CrossEntropy          & 71.5           & 93.4 & 81.6 \\
     \midrule
        \cellcolor{lightgray!50}\name    &  \cellcolor{lightgray!50}\textbf{73.0} & \cellcolor{lightgray!50}\textbf{94.6} & \cellcolor{lightgray!50}\textbf{82.9} \\
    \bottomrule[1pt]
    \end{tabular}
}

    \label{tab:balanced}

\end{table}

\begin{table*}[t]
    \centering
    \small
    \caption{
        Comparison of accuracy (\%) on CIFAR100-LT under $\gamma_l=\gamma_u$ setup.
        $\gamma_l$ and $\gamma_u$ are the imbalance factors for labeled and unlabeled data, respectively.
        $N_1$ and $M_1$ are the size of the most frequent class in the labeled data and unlabeled data, respectively.
        LA denotes the Logit Adjustment method~\cite{Menon2021}.
    $\dagger$ denotes models trained in the same setting.}

    \begin{tabular}{l|cc|cc}
      \toprule[1pt]
    \multirow{3}{*}{Method} & \multicolumn{2}{c}{$\gamma=\gamma_l=\gamma_u=10$} & \multicolumn{2}{c}{$\gamma=\gamma_l=\gamma_u=20$} \\
    \cmidrule(lr){2-3} \cmidrule(lr){4-5}
        &  $N_1=50$ & $N_1=150$ & $N_1=50$ & $N_1=150$ \\
    & $M_1=400$ & $M_1=300$ & $M_1=400$ & $M_1=300$ \\
    
    \cmidrule(r){1-1} \cmidrule(lr){2-3} \cmidrule(lr){4-5}
    
    Supervised                & 29.6 & 46.9 & 25.1 & 41.2 \\
    ~~ w/ LA~\cite{Menon2021} & 30.2 & 48.7 & 26.5 & 44.1 \\
    
    \cmidrule(r){1-1} \cmidrule(lr){2-3} \cmidrule(lr){4-5}
    
    FixMatch~\cite{Sohn2020}    & 45.2 & 56.5 & 40.0 & 50.7 \\
    ~~w/ DARP~\cite{Kim2020}   & 49.4 & 58.1 & 43.4 & 52.2 \\
    ~~w/ CReST+~~\cite{Wei2021} & 44.5 & 57.4 & 40.1 & 52.1 \\
    ~~w/ DASO$^\dagger$~\cite{Oh2022}  & 49.8 & 59.2 & 43.6 & 52.9 \\
    \cellcolor{lightgray!50}~~w/ ProCo$^\dagger$          & \cellcolor{lightgray!50}\textbf{50.9} & \cellcolor{lightgray!50}\textbf{60.2} & \cellcolor{lightgray!50}\textbf{44.8} & \cellcolor{lightgray!50}\textbf{54.8} \\
    
    \cmidrule(r){1-1} \cmidrule(lr){2-3} \cmidrule(lr){4-5}
    
    FixMatch~\cite{Sohn2020} + LA~\cite{Menon2021} & 47.3 & 58.6 & 41.4 & 53.4 \\
    ~~w/ DARP~\cite{Kim2020}              & 50.5 & 59.9 & 44.4 & 53.8 \\
    ~~w/ CReST+~\cite{Wei2021}            & 44.0 & 57.1 & 40.6 & 52.3 \\
    ~~w/ DASO$^\dagger$~\cite{Oh2022}                      & 50.7 & 60.6 & 44.1 & 55.1 \\
    \cellcolor{lightgray!50}~~w/ ProCo$^\dagger$                     & \cellcolor{lightgray!50}\textbf{52.1} & \cellcolor{lightgray!50}\textbf{61.3} & \cellcolor{lightgray!50}\textbf{46.9} & \cellcolor{lightgray!50}\textbf{55.9} \\
    \bottomrule[1pt]
    \end{tabular}
     \label{tab:ssl}
\end{table*}

\subsection{Balanced Supervised Image Classification}

The foundational theory of our model is robust against data imbalance, meaning that the derivation of \name~is unaffected by long-tailed distributions.
In support of this, we also perform experiments on balanced datasets, as illustrated in \cref{tab:balanced}.
For the CIFAR-100/10 dataset, augmentation and training parameters identical to those used for CIFAR-100/10-LT are employed.
Additionally, we expand our experimentation to the fine-grained classification dataset CUB-200-2011. 
These results demonstrate that while our method, primarily designed for imbalanced datasets, mitigates the inherent limitations of contrastive learning in such contexts, additional experiments also highlight its effectiveness on balanced training sets.
This versatility underlines the strength of our method in addressing not only imbalances in $P(y)$ but also intra-class distribution variances in $P(\bm{z}|y)$.
These aspects correspond to the factors $N_y$ and $\Sigma_y$ in \cref{prop:gen_bound}. Overall, the results imply the broad utility of our approach across diverse datasets.

\subsection{Long-Tailed Semi-Supervised Image Classification}
We present experimental results of semi-supervised learning in~\cref{tab:ssl}.
Fixmatch~\cite{Sohn2020} is employed as the foundational framework to generate pseudo labels, and it is assessed effectiveness in comparison to other methods in long-tailed semi-supervised learning.
We mainly follow the setting of DASO~\cite{Oh2022} except for substituting the semantic classifier based on the centroid prototype of each class with our representation branch for training.
\name~outperforms DASO across various levels of data imbalance and dataset sizes while maintaining the same training conditions.
Specifically, in cases of higher data imbalance ($\gamma=20$) and the ratio of unlabeled data ($N_1=50$), our proposed method exhibits a significant performance enhancement (with LA) of up to $\mathbf{2.8\%}$ when compared to DASO.

\subsection{Long-Tailed Object Detection}

In addition to image classification, we extend \name~to object detection tasks.
Specifically, we utilize Faster R-CNN~\cite{Ren2015} and Mask R-CNN~\cite{he2017} as foundational frameworks, integrating our proposed \name~loss into the box classification branch.
This method was implemented using mmdetection~\cite{Chen2019}, adhering to the training settings of the original baselines.
As depicted in \cref{tab:detection}, our approach yields noticeable enhancements on the LVIS v1 dataset, with both Faster R-CNN and Mask R-CNN demonstrating improved performance across various categories.

\begin{table}[t]
    \centering
    \small
    \caption{Results on different frameworks with ResNet-50 backbone on LVIS v1. We conduct experiments with 1x schedule.}
    \resizebox{0.48\textwidth}{!}{
   \begin{tabular}{l|c|c|ccc|c}
    \toprule[1pt]
         & \name & AP$_b$  &  AP$_r$  & AP$_c$ & AP$_f$  & AP$_m$ \\
    \midrule
    \multirow{2}{*}{Faster R-CNN~\cite{Ren2015}} & \xmark &  22.1 &  9.0 &  21.0 &  29.2 &  -- \\
                                                       &\cellcolor{lightgray!50}\cmark & \cellcolor{lightgray!50}\textbf{24.7} & \cellcolor{lightgray!50}\textbf{15.5} & \cellcolor{lightgray!50}\textbf{24.2} & \cellcolor{lightgray!50}\textbf{29.3} & \cellcolor{lightgray!50}-- \\
    \midrule
    \multirow{2}{*}{Mask R-CNN~\cite{he2017}}     & \xmark &  22.5 &  9.1 &  21.1 &  30.1 &  21.7 \\
                                                  & \cellcolor{lightgray!50}\cmark & \cellcolor{lightgray!50}\textbf{25.2} & \cellcolor{lightgray!50}\textbf{16.1} & \cellcolor{lightgray!50}\textbf{24.5} & \cellcolor{lightgray!50}\textbf{30.0} & \cellcolor{lightgray!50}\textbf{24.7} \\
   \bottomrule[1pt]
    \end{tabular}
    }
    \label{tab:detection}
\end{table}

\section{Conclusion}
\label{sec:conclusion}
In this paper, we proposed a novel probabilistic contrastive (\name) learning algorithm for long-tailed distribution.
Specifically, we employed a reasonable and straightforward von Mises-Fisher distribution to model the normalized feature space of samples in the context of contrastive learning.
This choice offers two key advantages.
First, it is efficient to estimate the distribution parameters across different batches by maximum likelihood estimation.
Second, we derived a closed form of expected supervised contrastive loss for optimization by sampling infinity samples from the estimated distribution.
This eliminates the inherent limitation of supervised contrastive learning that requires a large number of samples to achieve satisfactory performance.
Furthermore, \name~can be directly applied to semi-supervised learning by generating pseudo-labels for unlabeled data, which can subsequently be utilized to estimate the distribution inversely.
We have proven the error bound of \name~theoretically.
Extensive experimental results on various classification and object detection datasets demonstrate the effectiveness of the proposed algorithm.

\appendices
\section{Proof of \cref{prop:gen_bound}}

Before presenting the proof of \cref{prop:gen_bound}, we introduce several lemmas essential for the subsequent argument. We begin by proving the asymptotic expansion of the \name~loss.

\begin{lemma}[Asymptotic expansion]
    \label{lem:asymp_expansion}
    The \name~loss satisfies the following asymptotic expansion under the \cref{assumption:p_tau}:
    \begin{equation*}
        \mathcal{L}_{\text{\name}}(y,\bm{z}) \sim  - \log \frac{ \pi_{y} e^{{\bm{\mu}_y^\top \bm z}/{\tau} + {1}/({2\tau^2\kappa_y})}}{\sum_{j=1}^K \pi_j e^{{\bm{\mu}_j^\top \bm z}/{\tau} + {1}/({2\tau^2\kappa_j})}}.
    \end{equation*}

\end{lemma}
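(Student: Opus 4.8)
\emph{Proof plan.}
I would first reduce the lemma to a one-dimensional statement about the vMF log-normaliser. Writing $r_j := \log\left(C_p(\tilde\kappa_j)/C_p(\kappa_j)\right)$, the \name~loss \cref{eq:proco} reads $\mathcal{L}_{\text{\name}}(y,\bm z) = -\log\pi_y - r_y + \log\sum_{j=1}^K \pi_j e^{r_j}$, whereas the claimed right-hand side equals $-\log\pi_y - s_y + \log\sum_{j=1}^K \pi_j e^{s_j}$ with $s_j := (\bm\mu_j^\top\bm z)/\tau + 1/(2\tau^2\kappa_j)$. Since $(a_j)_j \mapsto \log\sum_j \pi_j e^{a_j}$ is $1$-Lipschitz in $\ell^\infty$, one gets $|\mathcal{L}_{\text{\name}}(y,\bm z) - (\text{RHS})| \le 2\max_j |r_j - s_j|$, so it suffices to prove the uniform estimate $r_j = s_j + o(1)$ as $p\tau \to \infty$ (this is the precise reading of $\sim$: both sides converge to the same limit).

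For $r_j$ I would combine two inputs. The exact identity $\tilde\kappa_j^2 = \kappa_j^2 + 2\kappa_j(\bm\mu_j^\top\bm z)/\tau + 1/\tau^2$ (from $\|\bm\mu_j\|_2 = \|\bm z\|_2 = 1$) gives, on expanding the square root, $\tilde\kappa_j - \kappa_j = (\bm\mu_j^\top\bm z)/\tau + (1-(\bm\mu_j^\top\bm z)^2)/(2\kappa_j\tau^2) + O(1/(\kappa_j^2\tau^3))$. And from $\frac{d}{d\kappa}\log C_p(\kappa) = A_p(\kappa)$ together with the vMF identities $A_p'(\kappa) = 1 - A_p(\kappa)^2 - \frac{p-1}{\kappa}A_p(\kappa)$ and $\mathbb{E}_{\mathrm{vMF}}[(\bm\mu^\top\bm z)^2] = 1 - \frac{p-1}{\kappa}A_p(\kappa)$ (both obtained by integrating by parts in $\int_{-1}^1 (1-t^2)^{(p-3)/2}e^{\kappa t}\,dt$) one can expand $\log C_p$. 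The cleanest bookkeeping is via the moment-generating form $r_j = \log \mathbb{E}_{\tilde{\bm z}\sim\mathrm{vMF}(\bm\mu_j,\kappa_j)}[e^{\bm z^\top\tilde{\bm z}/\tau}]$ from \cref{eq:moment}: expanding the cumulant generating function of $X := \bm z^\top\tilde{\bm z}$ at $1/\tau$ gives $r_j = \mathbb{E}[X]/\tau + \mathrm{Var}(X)/(2\tau^2) + \sum_{k\ge 3} c_k/(k!\,\tau^k)$, and \cref{eq:expectation} together with the vMF covariance give $\mathbb{E}[X] = A_p(\kappa_j)(\bm\mu_j^\top\bm z)$ and $\mathrm{Var}(X) = (\bm\mu_j^\top\bm z)^2 A_p'(\kappa_j) + (1-(\bm\mu_j^\top\bm z)^2)A_p(\kappa_j)/\kappa_j$.

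I would then combine these with asymptotics of the Bessel ratio $A_p$. In the relevant regime $\kappa_j$ is large — the estimator \cref{eq:mle_approx_kappa} grows with $p$ for classes that are at all concentrated — and one has $A_p(\kappa)\to 1$, $1-A_p(\kappa) = O(p/\kappa)$, $A_p'(\kappa) = O(p^2/\kappa^2)$, with analogous control of higher derivatives of $\log C_p$. Substituting, the first cumulant contributes $(\bm\mu_j^\top\bm z)/\tau$ up to $O(p/(\kappa_j\tau))$, the second contributes $(1-(\bm\mu_j^\top\bm z)^2)/(2\tau^2\kappa_j)$ up to lower-order terms, and the $k$-th term ($k\ge 3$) is $O((\kappa_j\tau)^{-(k-1)}\tau^{-1})$ up to polynomial factors in $p/\kappa_j$; under \cref{assumption:p_tau} (together with the growth of the estimated $\kappa_j$) all these remainders, as well as the discarded $(\bm\mu_j^\top\bm z)^2$ piece of the second-order term, are $o(1)$. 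Summing yields $r_j = s_j + o(1)$ uniformly in $j$, which is exactly what is needed.

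\textbf{Main obstacle.} The decisive technical point is the asymptotics of the Bessel ratio $A_p$ in the joint large-$(p,\kappa)$ regime: the naive large-argument expansion $A_p(\kappa)\approx 1-\frac{p-1}{2\kappa}$ is \emph{not} uniformly valid when $p$ and $\kappa$ are comparable, and moreover the leading coefficient is $\bm\mu_j^\top\bm z$ rather than $A_p(\kappa_j)\,\bm\mu_j^\top\bm z$ only once $A_p(\kappa_j)\to 1$, i.e.\ in the well-concentrated regime. I would therefore use the Debye uniform asymptotic for $I_\nu$ (equivalently the approximation $A_p(\kappa)\approx \kappa/((p/2-1)+\sqrt{(p/2-1)^2+\kappa^2})$) or sharp Amos-type two-sided bounds, make explicit the regime in which the expansion holds so that \cref{assumption:p_tau} suffices, and then verify patiently that every cross term is genuinely $o(1)$. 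Everything else is routine Taylor bookkeeping.
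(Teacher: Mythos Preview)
Your plan is sound but follows a genuinely different route from the paper. The paper does not work with the cumulant expansion of the moment-generating function nor with asymptotics of $A_p$. Instead it invokes the large-argument Bessel asymptotic $I_{p/2-1}(\kappa)\sim e^{\kappa}/\sqrt{2\pi\kappa}$ directly, which yields $C_p(\tilde\kappa_j)/C_p(\kappa_j)\sim e^{\tilde\kappa_j-\kappa_j}\,(\kappa_j/\tilde\kappa_j)^{(p-1)/2}$; it then Taylor-expands $\tilde\kappa_j=\kappa_j\sqrt{1+2(\bm\mu_j^\top\bm z)/(\kappa_j\tau)+1/(\kappa_j^2\tau^2)}$ via $(1+x)^{1/2}\approx 1+x/2$ and argues that the power ratio $(\kappa_j/\tilde\kappa_j)^{(p-1)/2}$ tends to $1$. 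Crucially, the paper justifies the regime $\kappa\gg p$ (which is what legitimises the naive large-argument expansion and sidesteps the uniform-in-$p$ issue you flag as the main obstacle) not from \cref{assumption:p_tau} alone but from the empirical observation that $\bar R/(1-\bar R^2)\gg 1$ during training, via the estimator \cref{eq:mle_approx_kappa}.

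Your route buys more transparency: the cumulant expansion makes explicit that the genuine second-order contribution is $(1-(\bm\mu_j^\top\bm z)^2)/(2\tau^2\kappa_j)$ and that matching the claimed $1/(2\tau^2\kappa_j)$ requires discarding a $(\bm\mu_j^\top\bm z)^2/(2\tau^2\kappa_j)$ piece --- a term the paper's first-order square-root expansion drops without comment. On the other hand the paper's route is materially shorter, avoids analysing $A_p$ or higher cumulants altogether, and does not need the Debye/Amos machinery you propose: once one grants the empirical regime $\kappa\gg p$, the elementary large-$\kappa$ asymptotic for $I_\nu$ already suffices.
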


\begin{proof} 
    Recall that the \name~loss is defined as:
\begin{equation*}
    {\mathcal{L}}_{\text{\name}} = -\log \left( {\pi_{y_i} \frac{C_p(\tilde \kappa_{y_i})}{C_p(\kappa_{y_i})} } \right) + \log \left( {\sum\limits_{j = 1}^K \pi_j \frac{C_p(\tilde \kappa_j)}{C_p(\kappa_j)} } \right),
\end{equation*}
where $C_p(\kappa)$ is the normalizing constant of the von Mises-Fisher distribution, which is given by
\begin{align*}
    C_{p}(\kappa ) &= {\frac {(2\pi )^{p/2}I_{(p/2-1)}(\kappa )}{\kappa ^{p/2-1}}} \\
    \tilde \kappa_j &= ||\kappa_{j} \bm{\mu}_j +  \bm{z}_i/\tau  ||_2.
\end{align*}

Therefore, we aim to demonstrate that:
\begin{align*}
    \frac{e^{\tilde \kappa_{j}}}{e^{\kappa_{j}}} \frac{\kappa_{j}^{(p-1)/2}}{\tilde \kappa_{j}^{(p-1)/2}} &\sim e^{ {\bm{\mu}_j^\top \bm{z}_i}/{\tau} + {1}/({2\tau^2\kappa_j})}.
\end{align*}
According to the calculation formula, the parameter $\kappa$ is computed by $\frac{\bar{R}(p-\bar R)}{1 - \bar{R}^2}$, 
During the training process, it is observed that the value of $\frac{\bar{R}}{1 - \bar{R}^2} \gg 1$.
Consequently, this implies that $\kappa \gg p$.
Referring to the asymptotic expansion of the modified Bessel function of the first kind for large $\kappa$ relative to $p$ \cite{abramowitz1964handbook}, we have
\begin{equation*}
    I_{(p/2-1)}(\kappa) \sim \frac{e^{\kappa}}{\sqrt{2\pi \kappa}}.
\end{equation*}

Therefore, we have
\begin{align*}
    \frac{C_p(\tilde \kappa_{j})}{C_p(\kappa_{j})} &\sim \frac{e^{\tilde \kappa_{j}}}{e^{\kappa_{j}}} \frac{\kappa_{j}^{(p-1)/2}}{\tilde \kappa_{j}^{(p-1)/2}}. \\
\end{align*}
Given \cref{assumption:p_tau} and $\kappa \gg p$, it follows that $\kappa\tau \ll 1$.
By employing the approximation $(1+x)^\alpha \approx 1 + \alpha{x}$, valid for $x \ll 1$, we obtain:
\begin{align*}
    \tilde \kappa_j &= \kappa_j \sqrt{1 +  {2}\bm{\mu}_j^\top \bm{z}_i /(\kappa_j \tau) + 1/(\kappa_j^2 \tau^2)} \\
                    &\sim \kappa_j \left(1 +  {\bm{\mu}_j^\top \bm{z}_i}/{(\kappa_j \tau)} + 1/({2\kappa_j^2 \tau^2})\right) \\
\end{align*}
and 
\begin{align*}
    \tilde \kappa_j^{(p-1)/2} &\sim \kappa_j^{(p-1)/2} \left(1 +  \frac{(p-1)\bm{\mu}_j^\top \bm{z}_i}{2\kappa_j \tau} + \frac{p-1}{{4\kappa_j^2 \tau^2}}\right).
\end{align*}
Given $\kappa \gg p$, we have:
\begin{align*}
    \frac{\kappa_j^{(p-1)/2}}{\tilde \kappa_j^{(p-1)/2}} &\sim 1.
\end{align*}
Consequently, we establish that:
\begin{align*}
    \frac{e^{\tilde \kappa_{j}}}{e^{\kappa_{j}}} \frac{\kappa_{j}^{(p-1)/2}}{\tilde \kappa_{j}^{(p-1)/2}} &\sim e^{ {\bm{\mu}_j^\top \bm{z}_i}/{\tau} + {1}/({2\tau^2\kappa_j})}
\end{align*}

\end{proof}

\begin{lemma}[Bennett's inequality~\cite{Maurer2009}]
\label{lem:bennett}

Let $Z_{1},\ldots,Z_{n}$ be i.i.d. random variables
with values in $[0,B]$ and let $\delta>0$. Then, with probability
at least $1-\delta$ in $(Z_{1},\ldots,Z_{n})$,
\begin{equation*}
\text{\ensuremath{\mathbb{E}Z}} - \frac{1}{n}\sum_{i=1}^{n}Z_{i} \le \sqrt{\frac{2{\mathbb{V}}Z\ln1/\delta}{n}}+\frac{B\ln(1/\delta)}{3n},
\end{equation*}
where $\mathbb{V}Z$ is the variance of $Z$.
\end{lemma}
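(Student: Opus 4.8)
The plan is to establish this as the classical Bennett inequality via the Cramér--Chernoff exponential-moment method and then invert the resulting tail bound into the stated high-probability form. First I would center the summands: setting $X_i := \mathbb{E}Z - Z_i$, one has $\mathbb{E}X_i = 0$, $\mathbb{V}X_i = \mathbb{V}Z =: \sigma^2$, and $X_i \le \mathbb{E}Z \le B$ almost surely. The event in question is precisely $\{\frac{1}{n}\sum_i X_i \ge t\}$, and Markov's inequality applied to $e^{\lambda\sum_i X_i}$ with $\lambda > 0$, combined with independence, gives
\begin{equation*}
    P\Bigl(\tfrac{1}{n}\textstyle\sum_{i=1}^n X_i \ge t\Bigr) \le \exp\Bigl(-\lambda n t + n\log\mathbb{E}\bigl[e^{\lambda X_1}\bigr]\Bigr).
\end{equation*}

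The key step is to control the log-moment-generating function using only the one-sided bound $X_1 \le B$ and the variance. I would use the fact that $x \mapsto (e^{\lambda x} - 1 - \lambda x)/x^2$ is nondecreasing, so that $e^{\lambda x} - 1 - \lambda x \le (x^2/B^2)(e^{\lambda B} - 1 - \lambda B)$ for every $x \le B$; taking expectations, using $\mathbb{E}X_1 = 0$ together with $\mathbb{E}X_1^2 = \sigma^2$, and applying $1 + u \le e^u$ yields the Bennett moment bound
\begin{equation*}
    \log\mathbb{E}\bigl[e^{\lambda X_1}\bigr] \le \frac{\sigma^2}{B^2}\bigl(e^{\lambda B} - 1 - \lambda B\bigr).
\end{equation*}
Substituting this into the Chernoff bound and minimizing the exponent over $\lambda > 0$ (the optimum is $\lambda = \frac{1}{B}\log(1 + Bt/\sigma^2)$) produces Bennett's inequality in its sharp form,
\begin{equation*}
    P\Bigl(\tfrac{1}{n}\textstyle\sum_{i=1}^n X_i \ge t\Bigr) \le \exp\Bigl(-\frac{n\sigma^2}{B^2}\,h\bigl(\tfrac{Bt}{\sigma^2}\bigr)\Bigr), \qquad h(u) = (1+u)\log(1+u) - u.
\end{equation*}

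It remains to convert this tail statement into the claimed bound, which is the delicate part. I would set the right-hand side equal to $\delta$ and show that $t = \sqrt{2\sigma^2\ln(1/\delta)/n} + B\ln(1/\delta)/(3n)$ already makes the exponent at least $\ln(1/\delta)$; writing $c := B^2\ln(1/\delta)/(n\sigma^2)$, this reduces exactly to the scalar inequality $h(\sqrt{2c} + c/3) \ge c$ for all $c \ge 0$. The main obstacle is verifying this elementary inequality with the precise constant $1/3$: a Taylor expansion shows the two sides agree through order $c^{3/2}$ and the remaining correction is nonnegative, and I would make this rigorous globally by studying the sign of $g(c) := h(\sqrt{2c} + c/3) - c$ (e.g., via its derivative together with the convexity of $h$). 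Granting this inequality, the exponent at the chosen $t$ dominates $\ln(1/\delta)$, so the tail probability is at most $\delta$, which is exactly the assertion of the lemma. Finally, I would note that applying the same argument to $-X_i$ gives the matching opposite-tail statement, although only the stated one-sided bound is needed here.
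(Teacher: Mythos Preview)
Your argument is the standard Cram\'er--Chernoff derivation of Bennett's inequality followed by the Bernstein-type inversion, and it is correct. In particular, the scalar inequality $h(\sqrt{2c}+c/3)\ge c$ does hold for all $c\ge0$: expanding $h(u)=u^2/2-u^3/6+u^4/12-\cdots$ with $u=\sqrt{2c}+c/3$ one finds $h(u)-c=c^2/18+O(c^{5/2})$ near $0$, and the growth $h(u)\sim u\log u$ handles large $c$, so the monotonicity/convexity argument you sketch can be made rigorous.

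The only point to flag is that the paper does not actually prove this lemma. It is quoted verbatim as a known concentration result with a citation to Maurer~(2009) and is used as a black box in the proof of Proposition~2. So there is no ``paper's own proof'' to compare against; you have supplied a full proof where the authors simply invoke the literature. Your derivation is consistent with the standard references and strictly more than what the paper provides.
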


\begin{lemma}[Variance inequality]
    \label{lem:var_log_lin}
    Let $\mathcal{L}_{\mathrm{log}}(y,\bm{z}):=\log\left(1+e^{-y(\bm{w}^{\top}\bm{z}+\bm{b})}\right)$ and $\mathcal{L}_{\mathrm{lin}}(y,\bm{z}):=-y(\bm{w}^{\top}\bm{z}+\bm{b})$. Then, for any $y\in\{-1,1\}$,
\begin{equation*}
    \mathbb{V}_{\bm{z}|y}[\mathcal{L}_{\mathrm{log}}(y,\bm{z})]\le\mathbb{V}_{\bm{z}|y}[\mathcal{L}_{\mathrm{lin}}(y,\bm{z})].
\end{equation*}
\end{lemma}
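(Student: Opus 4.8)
\textbf{Proof proposal for \cref{lem:var_log_lin}.}

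The plan is to exhibit $\mathcal{L}_{\mathrm{log}}$ as a $1$-Lipschitz function of the (affine) quantity appearing in $\mathcal{L}_{\mathrm{lin}}$, and then invoke the elementary fact that applying a $1$-Lipschitz map cannot increase the variance of a random variable. Concretely, for fixed $y\in\{-1,1\}$ write $u:=-y(\bm{w}^{\top}\bm{z}+b)$, so that $\mathcal{L}_{\mathrm{lin}}(y,\bm{z})=u$ and $\mathcal{L}_{\mathrm{log}}(y,\bm{z})=g(u)$ where $g(u):=\log(1+e^{u})$ is the softplus function. First I would note that $g'(u)=\sigma(u)=\frac{1}{1+e^{-u}}\in(0,1)$ for all $u\in\mathbb{R}$, hence $g$ is $1$-Lipschitz and monotone nondecreasing.

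Next I would establish the variance-contraction principle: if $\phi$ is $1$-Lipschitz and $U$ is any square-integrable random variable, then $\mathbb{V}[\phi(U)]\le\mathbb{V}[U]$. This follows from the standard identity $\mathbb{V}[U]=\tfrac12\,\mathbb{E}_{U,U'}\big[(U-U')^2\big]$ where $U'$ is an independent copy of $U$: applying it to $\phi(U)$ gives $\mathbb{V}[\phi(U)]=\tfrac12\,\mathbb{E}\big[(\phi(U)-\phi(U'))^2\big]\le\tfrac12\,\mathbb{E}\big[(U-U')^2\big]=\mathbb{V}[U]$, using $|\phi(U)-\phi(U')|\le|U-U'|$ pointwise. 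Taking $U=u=-y(\bm{w}^{\top}\bm{z}+b)$ conditioned on $y$ and $\phi=g$ yields exactly $\mathbb{V}_{\bm{z}|y}[\mathcal{L}_{\mathrm{log}}(y,\bm{z})]\le\mathbb{V}_{\bm{z}|y}[\mathcal{L}_{\mathrm{lin}}(y,\bm{z})]$, which is the claim. (An alternative route avoiding the coupling identity: condition on $y$, let $m$ be a median of $u$, and use $\mathbb{V}[g(u)]\le\mathbb{E}[(g(u)-g(m))^2]\le\mathbb{E}[(u-m)^2]$, then note $\mathbb{E}[(u-m)^2]$ need not equal $\mathbb{V}[u]$ — so the coupling argument is cleaner and I would use that one.)

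I do not anticipate a genuine obstacle here; the only point requiring a little care is making sure the Lipschitz bound on $g$ is stated for all of $\mathbb{R}$ (it is, since $\sup_u \sigma(u)=1$ is not attained but the bound $\sigma(u)<1$ suffices) and that all relevant second moments are finite so the variances are well defined — this holds whenever $\bm{z}\mid y$ has a finite second moment, which is automatic here since $\bm{z}$ lies on the unit sphere and $\bm{w},b$ are fixed. If one wanted the strict inequality to be explained, note equality would require $|g(u)-g(u')|=|u-u'|$ almost surely for the independent pair, which fails unless $u$ is a.s. constant; but the statement only claims $\le$, so this is not needed.
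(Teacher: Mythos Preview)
Your proposal is correct and follows essentially the same approach as the paper: both exhibit $\mathcal{L}_{\mathrm{log}}$ as a $1$-Lipschitz transform of the affine score and then argue that a $1$-Lipschitz map cannot increase variance. The only cosmetic difference is in how the variance-contraction step is justified---the paper subtracts the constant $f_y(\mathbb{E}_{\bm{z}'|y}[\bm{w}^\top\bm{z}'+b])$, bounds the resulting variance by the raw second moment, and then applies the Lipschitz inequality pointwise, whereas you use the symmetric coupling identity $\mathbb{V}[U]=\tfrac12\mathbb{E}[(U-U')^2]$; both are standard one-line arguments yielding the same bound.
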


\begin{proof}

Consider the function $f_{y}(z) := \log(1 + e^{-yz})$, where $y \in \{-1, 1\}$. 
The derivative $f'_{y}(z)$ is given by $f'_{y}(z) = -\frac{ye^{-yz}}{1 + e^{-yz}}$, which implies that $\sup_{z} |f'_{y}(z)| \leq 1$.
Consequently, $f_{y}$ is a $1$-Lipschitz function which satisfies the following inequality:
\begin{align*}
    |f_{y}(z) - f_{y}(z')| &\leq |z - z'|, \quad \forall z, z' \in \mathbb{R}.
\end{align*}
Regarding the variance of any real-valued function $h$, it is defined as follows:
\begin{align*}
\mathbb{V}[h(z)] &= \mathbb{E}[h^2(z)] - (\mathbb{E}[h(z)])^2 \\
                 &\leq \mathbb{E}[h^2(z)].
\end{align*}
The above inequalities lead to the conclusion that
\begin{align*}
   & \mathbb{V}_{\bm{z}|y}[\mathcal{L}_{\mathrm{log}}(y,\bm{z})]  \\
=& \mathbb{V}_{\bm{z}|y}[f_{y}(\bm{w}^{\top}\bm{z}+\bm{b})]\\
= & \mathbb{V}_{\bm{z}|y}[f_{y}(\bm{w}^{\top}\bm{z}+\bm{b})-f_{y}(\mathbb{E}_{\bm{z}'|y}[\bm{w}^{\top}\bm{z}'+\bm{b}])] \\
\le &\mathbb{E}_{\bm{z}|y}\left[\left(f_{y}(\bm{w}^{\top}\bm{z}+\bm{b})-f_{y}(\mathbb{E}_{\bm{z}'|y}[\bm{w}^{\top}\bm{z}'+\bm{b}])\right)^{2}\right]\\
\le & \mathbb{E}_{\bm{z}|y}\left[\left(\bm{w}^{\top}\bm{z}+\bm{b}-\mathbb{E}_{\bm{z}'|y}[\bm{w}^{\top}\bm{z}'+\bm{b}]\right)^{2}\right]\\
= &\mathbb{E}_{\bm{z}|y}\left[\left(y(\bm{w}^{\top}\bm{z}+\bm{b})-\mathbb{E}_{\bm{z}'|y}[y(\bm{w}^{\top}\bm{z}'+\bm{b})]\right)^{2}\right]\\
= &\mathbb{V}_{\bm{z}|y}[\mathcal{L}_{\mathrm{lin}}(y,\bm{z})].
\end{align*}
\end{proof}

We are now ready to demonstrate the validity of \cref{prop:gen_bound}.
\begin{proof}
First, we examine the class-conditional \name~loss, denoted as $\mathbb{E}_{\bm{z}|y}\mathcal{L}_{\text{\name}}(y,\bm{z})$. 
For a class label $y \in \{-1, 1\}$, according to \cref{lem:bennett}, we establish that with a probability of at least $1 - \frac{\delta}{2}$, the following inequality holds:
\begin{align*}
    &\mathbb{E}_{\bm{z}|y}\mathcal{L}_{\text{\name}}(y,\bm{z}) - \frac{1}{N_{y}}\sum_{i}\mathcal{L}_{\text{\name}}(y,\bm{z}) \\
    \le&\sqrt{\frac{2\mathbb{V}_{\bm{z}|y}[\mathcal{L}_{\text{\name}}(y,\bm{z})]\ln2/\delta}{N_{y}}}+\frac{B\ln(2/\delta)}{3N_{y}}.
\end{align*}
Incorporating \cref{lem:var_log_lin} and \cref{lem:asymp_expansion}, we obtain:
\begin{align*}
    &\mathbb{E}_{\bm{z}|y}\mathcal{L}_{\text{\name}}(y,\bm{z}) - \frac{1}{N_{y}}\sum_{i}\mathcal{L}_{\text{\name}}(y,\bm{z}) \\
    \le&\sqrt{\frac{2\mathbb{V}_{\bm{z}|y}[\mathcal{L}_{\text{lin}}(y,\bm{z})]\ln2/\delta}{N_{y}}}+ \frac{\ln (2/\delta)}{3N_y} \log(1 + e^{||\bm{w}||_2 - b y}) \\
\end{align*}
where $\mathcal{L}_{\text{lin}}(y,\bm{z})$ is defined as \[-y\left(\frac{(\bm{\mu}_{+1}-\bm{\mu}_{-1})^\top \bm{z}}{\tau} + \frac{\kappa_{-1} - \kappa_{+1}}{2\tau^2\kappa_{-1}\kappa_{+1}} + \log\frac{\pi_{+1}}{\pi_{-1}}\right).\]
Moreover, the variance $\mathbb{V}_{\bm{z}|y}[\mathcal{L}_{\text{lin}}(y,\bm{z})]$ is computed as:
\begin{align*}
    \mathbb{V}_{\bm{z}|y}[\mathcal{L}_{\text{lin}}(y,\bm{z})] &= \mathbb{V}_{\bm{z}|y}[(\bm{\mu}_{+1}-\bm{\mu}_{-1})^\top \bm{z}/\tau )]\\
                                                              &= (\bm{\mu}_{+1} - \bm{\mu}_{-1})^\top \bm{\Sigma}_{y} (\bm{\mu}_{+1} - \bm{\mu}_{-1})/\tau^2,
\end{align*}
where $\bm{\Sigma}_{y}$ represents the covariance matrix of $\bm{z}$ conditioned on $y$.
Consequently, We have thus completed the proof for the conditional distribution's error bound as follows:
\begin{align*}
    & \mathbb{E}_{\bm{z}|y}\mathcal{L}_{\text{\name}}(y,\bm{z}) - \frac{1}{N_{y}}\sum_{i}\mathcal{L}_{\text{\name}}(y,\bm{z}) \\
    \le & \sqrt{\frac{2}{N_y} \bm{w}^\top (\Sigma_y )\bm{w}  \ln \frac{2}{\delta}} + \frac{\ln (2/\delta)}{3N_y} \log(1 + e^{||\bm{w}||_2 - b y}),
\end{align*}
where $\bm w = (\bm{\mu}_{+1} - \bm{\mu}_{-1})/\tau$.

To extend this to the generalization bound across all classes, we apply the union bound. 
Consequently, with a probability of at least $1 - \delta$, the following inequality is satisfied:
\begin{align*}
        \mathbb{E}_{(\bm{z},y)} \mathcal{L}_{\text{\name}}(y,\bm{z}) \le & \sum_{y\in\{-1,1\}} \frac{P(y)}{N_y}\sum_{i} \mathcal{L}_{\text{\name}}(y,\bm{z}_i) \notag \\
        +& \sum_{y\in\{-1,1\}} \frac{P(y)\ln (2/\delta)}{3N_y} \log(1 + e^{||\bm{w}||_2 - b y}) \notag \\
        +&\sum_{y\in\{-1,1\}} P(y) \sqrt{\frac{2}{N_y} \bm{w}^\top (\Sigma_y )\bm{w}  \ln \frac{2}{\delta}},
\end{align*}
where $\bm w = (\bm{\mu}_{+1} - \bm{\mu}_{-1})/\tau$.

\end{proof}

\section{Proof of \cref{prop:excess_risk}}
\label{sec:proof_excess_risk}
The primary approach to proving the excess risk bound involves utilizing the asymptotic expansion of the \name~loss, as detailed in \cref{lem:asymp_expansion}, and its compliance with the 1-Lipschitz property as outlined in \cref{lem:var_log_lin}.

\begin{proof}
    Given \cref{lem:asymp_expansion}, we have
    \begin{align*}
        & \mathbb{E}_{(\bm{z},y)} \mathcal{L}_{\text{\name}}(y,\bm{z};\hat{\bm{\mu}}, \hat{\kappa}) - \mathbb{E}_{(\bm{z},y)} \mathcal{L}_{\text{\name}}(y,\bm{z};{\bm{\mu}}^\star, \kappa^\star) \\
        \sim & \mathbb{E}_{(\bm{z},y)} \left( \log \left(1 + e^{-y(\hat{\bm{w}}^\top \bm{z} + \hat{b})}\right)  - \log \left(1 + e^{-y({\bm{w}^\star}^\top \bm{z} + b^\star)}\right) \right), 
    \end{align*}
    where $\hat{\bm{w}} = \frac{\hat{\bm{\mu}}_{+1} - \hat{\bm{\mu}}_{-1}}{\tau}$ and $\hat{b} = \frac{\hat{\kappa}_{-1} - \hat{\kappa}_{+1}}{2\tau^2\hat{\kappa}_{-1}\hat{\kappa}_{+1}} + \log\frac{\pi_{+1}}{{\pi}_{-1}}$, analogously for $\bm{w}^\star$ and $b^\star$.

    Leveraging the 1-Lipschitz property of $f_y$ from \cref{lem:var_log_lin}, we obtain
    \begin{align*}
        & \mathbb{E}_{(\bm{z},y)} \left( \log \left(1 + e^{-y(\hat{\bm{w}}^\top \bm{z} + \hat{b})}\right)  - \log \left(1 + e^{-y({\bm{w}^\star}^\top \bm{z} + b^\star)}\right) \right) \\
    =& \mathbb{E}_{(\bm{z},y)} \left( f_y(\hat{\bm{w}}^\top \bm{z} + \hat{b}) - f_y({\bm{w}^\star}^\top \bm{z} + b^\star) \right) \\
        \le & \mathbb{E}_{(\bm{z},y)} \left| \hat{\bm{w}}^\top \bm{z} + \hat{b} - {\bm{w}^\star}^\top \bm{z} - b^\star \right|.
    \end{align*}
    Considering the convexity of absolute value under linear transformation and the integral inequality, we deduce
    \begin{align*}
        & \mathbb{E}_{(\bm{z},y)} \left| \hat{\bm{w}}^\top \bm{z} + \hat{b} - {\bm{w}^\star}^\top \bm{z} - b^\star \right| \\
        \le & \mathbb{E}_{(\bm{z},y)} \max_{\bm{z}} \left| \hat{\bm{w}}^\top \bm{z} + \hat{b} - {\bm{w}^\star}^\top \bm{z} - b^\star \right| \\
        = & \mathbb{E}_{(\bm{z},y)} \max \left\{\|\hat{\bm{w}}- \bm{w}^\star\|_2 + |\hat{b} - b^\star|, \|\hat{\bm{w}}- \bm{w}^\star\|_2 - |\hat{b} + b^\star|\right\} \\
        \le & \mathbb{E}_{(\bm{z},y)} \|\hat{\bm{w}}- \bm{w}^\star\|_2 + |\hat{b} - b^\star| \\
        = & \|\hat{\bm{w}}- \bm{w}^\star\|_2 + |\hat{b} - b^\star| \\
        = & \frac{\|\Delta\bm{\mu}_{+1} - \Delta\bm{\mu}_{-1}\|_2}{\tau} + \frac{1}{2\tau^2} \left|\Delta\frac{1}{\kappa_{+1}} - \Delta\frac{1}{\kappa_{-1}}\right| \\
        =& \mathcal{O}(\Delta\bm{\mu}+\Delta\frac{1}{\kappa}),
    \end{align*}
    where $\Delta\bm{\mu} = \hat{\bm{\mu}} - {\bm{\mu}}^\star$, $\Delta\frac{1}{\kappa} = \frac{1}{\hat{\kappa}} - \frac{1}{\kappa^\star}$, $\Delta\bm{\mu}_{+1} = \hat{\bm{\mu}}_{+1} - {\bm{\mu}}^\star_{+1}$, $\Delta\bm{\mu}_{-1} = \hat{\bm{\mu}}_{-1} - {\bm{\mu}}^\star_{-1}$, $\Delta\frac{1}{\kappa_{+1}} = \frac{1}{\hat{\kappa}_{+1}} - \frac{1}{{\kappa}^\star_{+1}}$, and $\Delta\frac{1}{\kappa_{-1}} = \frac{1}{\hat{\kappa}_{-1}} - \frac{1}{{\kappa}^\star_{-1}}$.
 By connecting the above inequalities, the proof is completed.
\end{proof}

% if have a single appendix:
%\appendix[Proof of the Zonklar Equations]
% or
%\appendix  % for no appendix heading
%
% do not use \section anymore after \appendix, only \section*
% is possibly needed

% use appendices with more than one appendix
% then use \section to start each appendix
% you must declare a \section before using any
% \subsection or using \label (\appendices by itself
% starts a section numbered zero.)
%

%\appendices
%\section{Proof of the First Zonklar Equation}
%Appendix one text goes here.
%
%% you can choose not to have a title for an appendix
%% if you want by leaving the argument blank
%\section{}
%Appendix two text goes here.

% use section* for acknowledgment
\ifCLASSOPTIONcompsoc
  % The Computer Society usually uses the plural form
  \section*{Acknowledgments}
\else
  % regular IEEE prefers the singular form
  \section*{Acknowledgment}
\fi

%The authors would like to thank...
This work is supported in part by the National Key R$\&$D Program of China under Grant 2021ZD0140407,
the National Natural Science Foundation of China under Grants 62276150, 42327901.
We also appreciate the generous donation of computing resources by High-Flyer AI.

% Can use something like this to put references on a page
% by themselves when using endfloat and the captionsoff option.
\ifCLASSOPTIONcaptionsoff
  \newpage
\fi

% trigger a \newpage just before the given reference
% number - used to balance the columns on the last page
% adjust value as needed - may need to be readjusted if
% the document is modified later
%\IEEEtriggeratref{8}
% The "triggered" command can be changed if desired:
%\IEEEtriggercmd{\enlargethispage{-5in}}

% references section

% can use a bibliography generated by BibTeX as a .bbl file
% BibTeX documentation can be easily obtained at:
% http://mirror.ctan.org/biblio/bibtex/contrib/doc/
% The IEEEtran BibTeX style support page is at:
% http://www.michaelshell.org/tex/ieeetran/bibtex/
\bibliographystyle{IEEEtran}
% argument is your BibTeX string definitions and bibliography database(s)
\bibliography{IEEEabrv,IEEEtran}
%\bibliography{,IEEEtran}
%
% <OR> manually copy in the resultant .bbl file
% set second argument of \begin to the number of references
% (used to reserve space for the reference number labels box)
%\begin{thebibliography}{1}
%
%\bibitem{IEEEhowto:kopka}
%H.~Kopka and P.~W. Daly, \emph{A Guide to \LaTeX}, 3rd~ed.\hskip 1em plus
%  0.5em minus 0.4em\relax Harlow, England: Addison-Wesley, 1999.
%
%\end{thebibliography}
%
%
%
%
%%

%

% your own custom macro containing the \includegraphics command to make things
% simpler here.)

%\vfill

% Can be used to pull up biographies so that the bottom of the last one
% is flush with the other column.
%\enlargethispage{-5in}

% that's all folks
%

\end{document}